\documentclass[journal]{IEEEtran}
\usepackage{amsmath}
\usepackage{bbding}
\usepackage{bbm}
\usepackage{amssymb}
\usepackage{booktabs}
\usepackage{makecell}
\usepackage{graphicx}
\usepackage{subfigure}
\usepackage{bmpsize}
\usepackage{epstopdf}
\usepackage{cite}
\usepackage[ruled,vlined]{algorithm2e}
\usepackage{colortbl}
\usepackage[utf8]{inputenc}
\usepackage{pifont}

\usepackage[pagebackref=false,breaklinks=true,colorlinks,bookmarks=false]{hyperref}
\usepackage{microtype}
\usepackage{cleveref}
\usepackage{tabularray}
\usepackage{multirow}
\usepackage{multicol}
\usepackage{color}
\usepackage[table]{xcolor} % 提供颜色支持，并启用表格颜色功能
\usepackage{colortbl}

\usepackage{amsthm}
\newtheorem{theorem}{Theorem}

\Crefname{figure}{Fig.}{Figs.}
\Crefname{equation}{Eq.}{Eqs.}
\Crefname{theorem}{Theorem}{Theorems}

\begin{document}
% \title{CKD: Contrastive Knowledge Distillation from A \\ Sample-wise Perspective}
\title{CKD: Contrastive Knowledge Distillation from A Sample-Wise Perspective}

\author{
Wencheng~Zhu, Xin~Zhou, Pengfei~Zhu, Yu~Wang, and~Qinghua~Hu, \IEEEmembership{Senior~Member,~IEEE}
\thanks{Wencheng Zhu, Xin Zhou, Pengfei Zhu, Yu Wang, and Qinghua Hu are with the School of Artificial Intelligence, Tianjin University, Tianjin, 300372, China. Email: wenchengzhu@tju.edu.cn; zhouxinzzz@tju.edu.cn; zhupengfei@tju.edu.cn;
wangyu{\_}@tju.edu.cn; huqinghua@tju.edu.cn.
\emph{The corresponding author is Pengfei Zhu.}
}
}

\markboth{}%
{Shell \MakeLowercase{\textit{et al.}}: Bare Demo of IEEEtran.cls for IEEE Journals}

\maketitle

\begin{abstract}

In this paper, we propose a simple yet effective contrastive knowledge distillation framework that achieves sample-wise logit alignment while preserving semantic consistency. 
Conventional knowledge distillation approaches exhibit over-reliance on feature similarity per sample, which risks overfitting,  and contrastive approaches focus on inter-class discrimination at the expense of intra-sample semantic relationships.
Our approach transfers "dark knowledge" through teacher-student contrastive alignment at the sample level.
Specifically, our method first enforces intra-sample alignment by directly minimizing teacher-student logit discrepancies within individual samples.
Then, we utilize inter-sample contrasts to preserve semantic dissimilarities across samples.
By redefining positive pairs as aligned teacher-student logits from identical samples and negative pairs as cross-sample logit combinations, we reformulate these dual constraints into an InfoNCE loss framework, reducing computational complexity lower than sample squares while eliminating dependencies on temperature parameters and large batch sizes.
We conduct comprehensive experiments across three benchmark datasets, including the \textit{CIFAR-100}, \textit{ImageNet-1K}, and \textit{MS COCO} datasets, and experimental results clearly confirm the effectiveness of the proposed method on image classification, object detection, and instance segmentation tasks.

%Our source codes will be publicly available at %\href{https://github.com/wencheng-zhu/CKD}{https://github.com/wencheng-zhu/CKD
%}.

\end{abstract}

\begin{IEEEkeywords}

Knowledge distillation, contrastive learning, intra-sample consistency, inter-sample contrast, logit alignment

\end{IEEEkeywords}

\IEEEpeerreviewmaketitle

\section{Introduction}

\IEEEPARstart{K}{nowledge} distillation (KD) \cite{tang2020understanding,gou2021knowledge} effectively compresses knowledge without significant performance degradation, making it possible to deploy complex models on resource-limited devices \cite{li2023object,wang2021knowledge,hao2023vanillakd}. 
Owing to its practical benefits, knowledge distillation has garnered much attention over the past decade \cite{cho2019efficacy,li2023curriculum}. Numerous knowledge distillation approaches have been proposed \cite{dong2023diswot,zhang2023pointdistiller} and introduced into a wide spectrum of tasks \cite{beyer2022knowledge,wang2023efficient}, such as image classification \cite{guo2023linkless,tu2022general}, object detection \cite{yang2022focal,shen2022fast}, and segmentation \cite{liu2019structured,shu2021channel,yang2022cross}. Despite considerable improvements achieved, existing knowledge distillation methods still struggle to determine what knowledge to distill and how to distill it \cite{stanton2021does, Zeyuan2023towards,ojha2024knowledge}. Here, we posit that the student model can well replicate the outputs of the teacher model when each sample is properly aligned.

\begin{figure}[t] 
\centering 
\includegraphics[width=1.0\columnwidth]{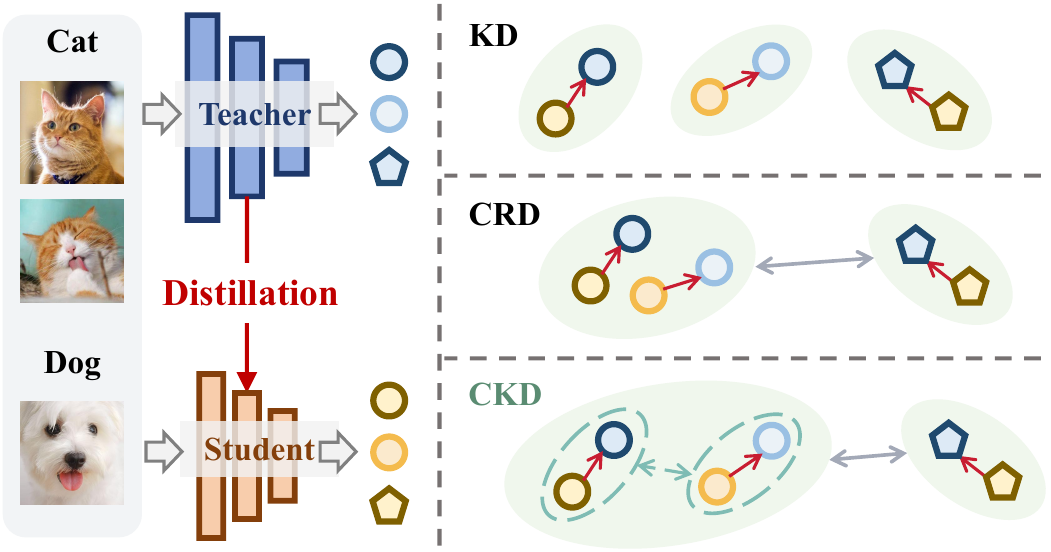} 
\vspace{-0.2cm}
\caption{
Comparison of three knowledge distillation approaches. Classic KD aligns feature similarities between paired teacher-student samples, while CRD leverages contrastive learning to align class-wise semantics. In contrast, the proposed CKD preserves sample-wise similarity for intra-sample alignment while capturing structural semantics for inter-sample contrast.
} 
\label{fig1} 
\end{figure}

Hinton \textit{et al.} \cite{hinton2015distilling} introduced seminal work on knowledge distillation that transfers knowledge by minimizing the Kullback-Leibler (KL) divergence between teacher and student logits.
Subsequent research has evolved along two main groups, including logit-based and feature-based approaches. 
Logit-based approaches mimic output teacher logits \cite{yang2023knowledge,li2023automated}. 
Representative methods are vanilla KD \cite{hinton2015distilling}, DML \cite{zhang2018deep}, DIST \cite{huang2022knowledge}, and DKD \cite{zhao2022decoupled}. 
Despite minimal computational and storage requirements \cite{li2022ckdf,li2021reskd,zhang2022latent}, these approaches generally underperform feature-based methods. 

Feature-based approaches align intermediate layer representations between teacher and student networks  \cite{lin2022knowledge,hao2023ofa}. 
Typical methods include FitNet \cite{romero2014fitnets}, OFD \cite{heo2019comprehensive}, and ReviewKD \cite{chen2021distilling}. 
These methods achieve significant performance, however, they often overfit due to rote memorization, achieving a low training loss but poor generalization. 
Subsequent methods like RKD \cite{park2019relational} address this limitation by modeling high-order feature correlations, whereas SP \cite{tung2019similarity} enhances robustness through pairwise similarity preservation in representation space. 
While these techniques alleviate over-fitting to some extent, their core focus on structural relationships may neglect the transfer of semantic knowledge.
Contrastive learning methods, such as CRD \cite{tian2020representation}, enhance structural representation distillation.
However, this approach encounters practical limitations: the reliance on large negative sample batches necessitates a large memory buffer, incurring high computational costs, especially for dense prediction tasks.

To address structure-preserving limitations, we propose a \textbf{C}ontrastive \textbf{K}nowledge \textbf{D}istillation approach, dubbed CKD, which unifies intra- and inter-sample distillation in a contrastive learning framework. \Cref{fig1} shows the main idea of the proposed approach compared to two representative approaches.
We can observe that our approach aligns same-sample logits while separating cross-sample logits.
Our method draws inspiration from the cross-modal alignment paradigm of CLIP, where we take knowledge distillation as a dual-modality alignment task between teacher and student embedding space with the fixed teacher networks. Unlike conventional approaches \cite{romero2014fitnets,chen2021distilling} that enforce feature mimicking, our framework conducts cross-modal alignment through contrastive training, and this paradigm enables the student to capture the teacher's semantic representations beyond feature consistency while leveraging the efficient sample utilization strategy of CLIP for distillation effectiveness \cite{phuong2019towards, wang2020understanding}. However, directly applying CLIP is problematic as the teacher model is fixed. Therefore, we propose sample-wise contrastive learning with dual constraints: the intra-sample distillation enhances logit similarities within individual samples, and the inter-sample distillation increases differences across samples.

The intra- and inter-sample alignment constraints can be formulated as a sample-wise contrastive learning problem. 
Unlike existing contrastive distillation methods \cite{tian2020representation,bai2020feature} 
that focus on class-wise semantic structures, the sample-wise framework has two key advantages. 
First, our method directly optimizes the similarity and dissimilarity of logits within individual sample rather than aligning teacher and student models at the class level.
Second, by eliminating the dependency on class annotations, the framework supports fully unsupervised distillation.
Notably, a key benefit of this sample-wise framework lies in its computational efficiency. 
Different from class-wise methods requiring large batch sizes to sample the entire negative space, our approach needs a minimal number of negative samples.
Additionally, we prioritize logit distillation over feature distillation for two reasons. On one hand, logits provide compact and high-level semantic representations that are suitable for contrastive learning. 
On the other hand, compared to features, the lower dimensionality of logits significantly simplifies negative sampling.
The framework remains model-agnostic and can be seamlessly integrated with feature distillation approaches.

Our contributions can be summarized into three aspects:
\begin{enumerate}

\item We propose a contrastive knowledge distillation framework with dual constraints, in which our method simultaneously conducts intra-sample and inter-sample distillation between teacher and student logits.
    
\item We cast these dual intra-sample and inter-sample constraints into a sample-wise contrastive formulation through customized positive and negative pairs, achieving effective and computationally efficient training. 
    
\item Experiments on benchmark datasets validate the superiority of our method. Without bells and whistles, our method outperforms vanilla KD by 1.4\% and 0.98\% on the CIFAR-100 and ImageNet-1K datasets, respectively.

\end{enumerate}

\section{Related Work}
In this section, we first outline recent progress in knowledge distillation \cite{menon2021statistical,lin2023supervised,li2024promptkd}. Then, we review related contrastive learning approaches.

\subsection{Knowledge Distillation}
Knowledge distillation is a widely adopted model compression technique \cite{song2022spot,menon2021statistical,yang2023online} that allows compact models to achieve competitive performance with their larger counterparts
\cite{aguilar2020knowledge, niu2022respecting,meng2023distillation}.
As a pioneer, Hinton \textit{et al.} \cite{hinton2015distilling} introduced the concept of knowledge distillation that transfers ``dark knowledge'' through minimization of the KL divergence between teacher and student logit distributions.
Over the past few years, a rich line of knowledge distillation methods has been proposed \cite{klingner2023x3kd, cho2023ambiguity, cui2023kd}. 
These methods come in two flavors: logit distillation and feature distillation \cite{miles2023mobilevos,li2023rethinking}. 

\subsubsection{Logit Distillation}
Early logit distillation approaches mainly focused on enhancing model generalization through regularization and optimization. 
However, research on early-stage logit distillation has remained limited due to its inferior performance.
Initial attempts like DML \cite{zhang2018deep} improved network generalization via mutual learning across model ensembles, while TAKD \cite{mirzadeh2020improved} bridged the teacher-student capacity gap through multi-step distillation with intermediate networks. 
Subsequent advances introduced feature representation enhancement, distributional alignment, and automated framework design. ICKD-C \cite{liu2021exploring} captured feature diversity through grid-level inter-channel correlations. 
GLD \cite{kim2021distilling}combined local spatial pooling for fine-grained knowledge extraction. 
DIST \cite{huang2022knowledge} aligned inter-class and intra-class probabilistic relations, respectively. 
DKD \cite{zhao2022decoupled} decoupled logits into a target class and all non-target classes to improve transfer flexibility. 
Auto-KD \cite{li2023automated} employed a Monte Carlo tree search for architecture optimization. 
CTKD \cite{li2023curriculum} dynamically adjusted task difficulty via learnable temperature scheduling. 
MLKD \cite{jin2023multi} proposed a multi-level logit distillation framework that distills instance-level, batch-level, and class-level information. 
LSKD \cite{sun2024logit} took temperature as the standard deviation of logit and applied logit standardization before the softmax prediction. 
Moreover, there are theoretical studies \cite{phuong2019towards,cheng2020explaining} that further elucidate the principles of distillation. 
Our method belongs to logit distillation that leverages contrastive teacher-student logits to enhance knowledge transfer.

\begin{figure*}[t] 
\centering 
\includegraphics[width=2.0\columnwidth]{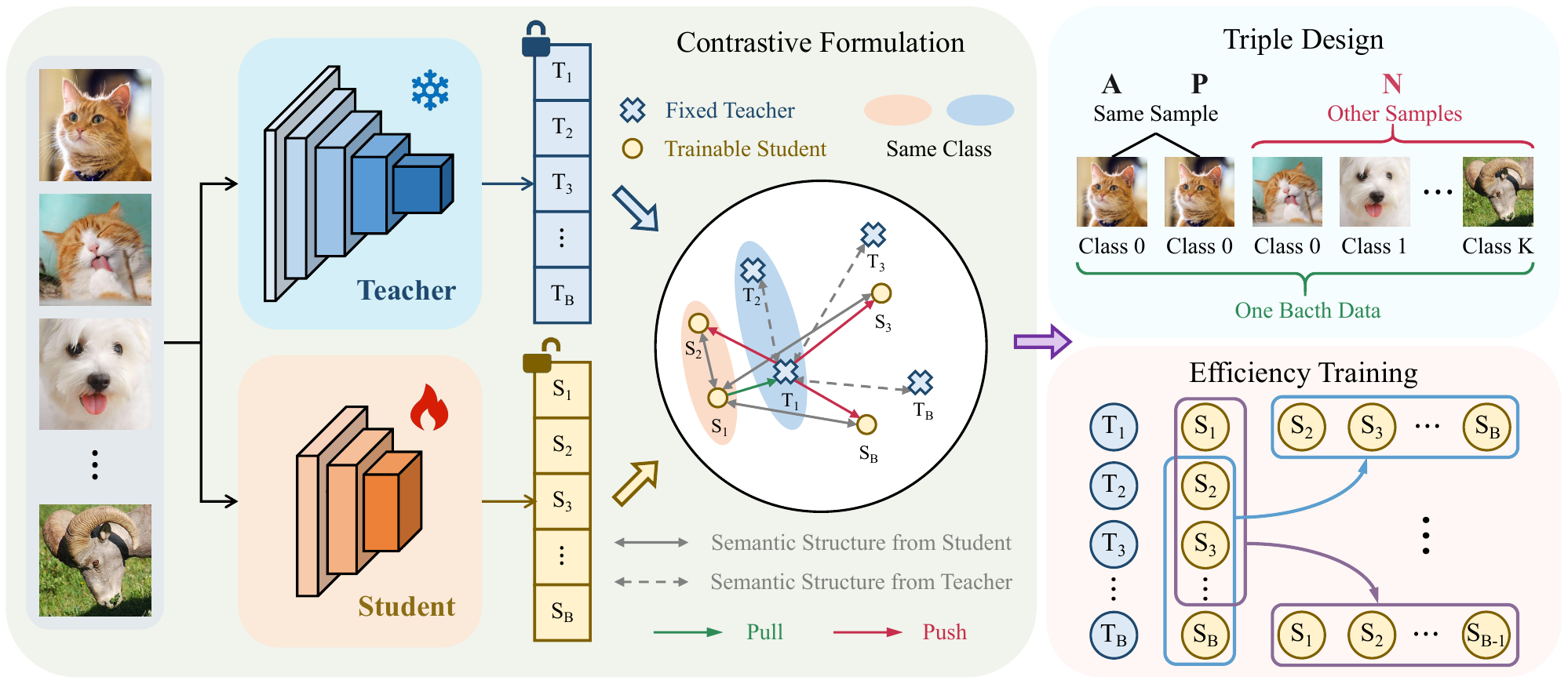} 
\vspace{-0.2cm}
\caption{
The overall architecture of the proposed Contrastive Knowledge Distillation framework. 
The framework incorporates newly designed triplets to optimize intra-sample feature similarity and cross-sample semantics simultaneously. For each input instance, the teacher and student logits form positive pairs, while other samples within the mini-batch serve as negative pairs. This formulation significantly enhances training efficiency through computationally efficient negative sample reuse, effectively addressing the memory constraints typically associated with contrastive learning approaches.
} 
\label{fig2} 
\end{figure*}

\subsubsection{Feature Distillation}
Feature distillation focuses on aligning intermediate representations of teacher and student models. 
FitNet \cite{romero2014fitnets} pioneered feature distillation by using hint layers to match student features to teacher activations, but its strict alignment often caused overfitting. 
Subsequent methods adopted different alignment strategies.
For example, AT \cite{komodakis2017paying} mimicked spatial attention maps of a teacher model to emphasize task-relevant regions. 
Differently, RKD \cite{park2019relational} preserved relational structures between samples by using distance-wise and angle-wise distillation objectives. 
IRG \cite{liu2019knowledge} incorporated both instance relationships and cross-layer feature transformations. 
CCKD \cite{peng2019correlation} captured multi-instance correlations for knowledge transfer. 
SP \cite{tung2019similarity} introduced a similarity-preserving loss to maintain the activation patterns of input pairs. Instead, OFD \cite{heo2019comprehensive} introduced a margin ReLU, an optimized distillation feature position, and a selective L2 distance for efficient network compression.
ReviewKD \cite{chen2021distilling} leveraged cross-layer feature aggregation to guide the learning of a student model. 
SimKD \cite{chen2022knowledge} reused the teacher's classifier and introduced an additional projector to align student features. 
CAT-KD \cite{guo2023class} enhanced the interpretability of knowledge distillation approaches by distilling class activation maps to highlight discriminative regions. 
While feature-based distillation generally outperforms logit-based methods in accuracy, it incurs significant computational overhead due to intermediate feature storage and alignment.

% Early contrastive knowledge distillation methods optimize the alignment of feature representations between the teacher model and the student model through a contrastive learning framework. However, most of these methods rely heavily on the use of large-scale negative samples, which not only increases computational and storage costs but can also lead to inefficiencies during the training process. For example, CRD  \cite{tian2020representation} aligns the feature representations of the teacher and student models through a contrastive learning framework, but its reliance on large-scale negative samples limits its applicability on large datasets. CRCD \cite{zhu2021complementary} further extends the scope of contrastive knowledge distillation by introducing complementary relationship contrastive learning to optimize the complementary relationships between features, but it also faces the demand for large-scale negative samples. WCoRD \cite{chen2021wasserstein} improves the robustness of the contrastive learning process by optimizing with Wasserstein distance, but it has high computational complexity and requires a substantial quantity and quality of negative samples.
% In addition, these methods ignore that 
% the teacher model is fixed in knowledge distillation and simply applies contrastive loss in training, which may result in gradient conflicts. To alleviate these problems, we introduce a sample-level contrastive learning framework. 
Existing contrastive knowledge distillation methods typically align teacher-student feature representations by enforcing intra-class similarity and inter-class separation through contrastive learning frameworks. 
For example, CRD \cite{tian2020representation} treats teacher features as anchors and leverages large-scale negative samples to push student features away from dissimilar classes.
CRCD \cite{zhu2021complementary}, which optimizes complementary relationships to reduce dependency on negatives. 
While these approaches demonstrate theoretical promise, there are several limitations, including heavy reliance on large-scale negative sample pairs, compromised scalability due to memory-intensive operations in dense prediction tasks, and insufficient consideration of the fixed teacher model with gradient conflicts. 
To handle these limitations, we propose a sample-wise contrastive learning framework that simultaneously preserves intra-sample feature similarity and cross-sample semantics with the optimized contrastive triplets, effectively addressing the memory constraints.

\subsection{Contrastive Learning} 
Contrastive learning stands out as a powerful and popular branch of self-supervised learning, and plenty of efforts have been devoted to contrastive learning \cite{bachman2019learning,henaff2020data}. We merely review relevant works that are concerned with noise-contrastive estimation (NCE) and InfoNCE. Gutmann \textit{et al.} \cite{gutmann2012noise} firstly proposed the theory of noise-contrastive estimation, in which the model learns to discriminate between samples from the actual data distribution and those from a noise distribution, revealing distinct features of the data. In noise-contrastive estimation, increasing the volume of negative samples can largely reduce reliance on the quality of the noise distribution. 
Sohn \textit{et al.} \cite{sohn2016improved} extended this framework with the multi-class N-pair loss, enabling efficient reuse of negative embeddings across iterations.

Subsequent works optimized negative sampling and scalability. For example, 
InstDisc \cite{wu2018unsupervised} introduced a memory bank to store historical instance features, enabling stable negative sampling without recomputation. CPC \cite{oord2018representation} popularized the InfoNCE loss by formalizing contrastive learning as maximizing mutual information between positive pairs. 
Likewise, CMC \cite{tian2020contrastive} leveraged a contrastive loss function to optimize the mutual information across different view representations of the same sample. 
MoCo \cite{he2020momentum} decoupled batch size from negative sample size via a momentum-updated queue and conducted large-scale negative sampling with limited memory. SimCLR \cite{chen2020simple} demonstrated that strong data augmentation paired with a nonlinear projection head achieves state-of-the-art results, albeit with high computational demands.
CLIP \cite{radford2021learning} proposed a large-scale multimodal model pre-trained through the joint training of a text encoder and an image encoder with the InfoNCE loss. While these methods highlight the efficacy of contrastive learning, they rely heavily on large batches of negative samples to ensure the effective separation of representations. 
This dependency creates computational bottlenecks, particularly in dense prediction tasks.
In this paper, we adapt contrastive learning to knowledge distillation by rethinking negative sampling efficiency. Instead of relying on massive negative batches, we propose sample-wise triples that leverage intra-class and inter-class relationships. Our approach maintains the structural benefits of contrastive learning while reducing memory overhead, making it scalable to complex tasks like segmentation and detection.

\section{Approach}
In this section, we first describe some notations. Then, we delve into contrastive knowledge distillation. Lastly, we provide a discussion about the proposed method.

\subsection{Notations}
Knowledge distillation is a technique used to transfer knowledge from a trained teacher model $\mathcal{T}$ to a more compact student model $\mathcal{S}$. 
Here are some notations, given a batch of input samples $\mathcal{X}=[\boldsymbol{x}_0, \boldsymbol{x}_1, \ldots, \boldsymbol{x}_{n-1}]$, we calculate teacher logits $\boldsymbol{T} = \mathcal{T}\left ( \mathcal{X};\theta_{\mathcal{T}} \right )$ and student logits $\boldsymbol{S} = \mathcal{S}\left ( \mathcal{X};\theta_{\mathcal{S}}  \right ) $, where $\boldsymbol{T}=[\boldsymbol{t}_0, \boldsymbol{t}_1, \ldots, \boldsymbol{t}_{n-1}]\in \mathbb{R}^{c\times n}$ and $\boldsymbol{S}=[\boldsymbol{s}_0, \boldsymbol{s}_1, \ldots, \boldsymbol{s}_{n-1}]\in \mathbb{R}^{c\times n}$. 
$\theta_\mathcal{T}$ and $\theta_\mathcal{S}$ are model parameters. $\mathbf{t}_i$ and $\mathbf{s}_i$ represent the teacher and student the logits for the $i$-th sample in which $i \in \mathcal{D}$, $\mathcal{D}=[1,2,\ldots,n-1]$. $n$ denotes the sample count, and $c$ is the number of classes that corresponds to the logit dimension. 
The student model $\mathcal{S}$ learns to mimic the teacher’s output. A typical distillation loss combines task loss and distillation loss.
Formally, the objective of knowledge distillation approaches can be written as,
\begin{equation}
   \mathcal{L}  = \mathcal{L}_{\text{Task}} + \alpha \mathcal{L}_{\text{KD}}.
   \label{eq1}
\end{equation}
$\mathcal{L}_{\text{Task}}$ is the standard cross-entropy loss for image classification, while $\alpha$ balances the classification loss $\mathcal{L}_{\text{Task}}$ and the distillation loss $\mathcal{L}_{\text{KD}}$. In what follows, we will detail the knowledge distillation loss $\mathcal{L}_{\text{KD}}$.

\subsection{Contrastive Knowledge Distillation}
\label{subsec:ckd}
Traditional knowledge distillation methods emphasize class-level contrastive learning by leveraging intra-class and inter-class relationships. 
In contrast, our approach adopts sample-level distillation. To be specific, for each sample, we enforce alignment between 
the student logit $\mathbf{s}_i$ with its corresponding teacher logit $\mathbf{t}_i$, while simultaneously distancing $\mathbf{s}_i$ from negative student logits $\mathbf{s}_j \; (j \neq i)$. This dual mechanism allows our method to distill not only numerical similarities in individual samples but also semantic structural relationships across different samples.
As illustrated in \Cref{fig2}, our contrastive knowledge distillation framework integrates two complementary components. The intra-sample distillation directly aligns paired student-teacher logits for each sample, and the inter-sample distillation employs contrastive separation between the target logit $\mathbf{s}_i$ and irrelevant logits $\mathbf{s}_j$.
In the following sections, we elaborate on the formulation of these constraints.

\subsubsection{Intra-Sample Distillation}
Unlike class-wise methods that minimize intra-class distances for knowledge transfer, our method explicitly leverages
sample-wise information to enhance the similarity between teacher and student logits for each individual sample. 
Here, we introduce the intra-sample loss, denoted as $\mathcal{L}_{\text{intra}}$, to optimize the parameters of the student model $\mathcal{S}$ by minimizing the distance $d(\cdot,\cdot)$ between aligned teacher-student logit pairs. Formally, the loss is defined as,
\begin{equation}
\begin{aligned}
\mathcal{L}_{\text{intra}} 
&=  \frac{1}{n}\sum_{i=0}^{n} d\left ( \boldsymbol{t}_i,\boldsymbol{s}_i \right) \\
&=  \underset{\boldsymbol{x}_i \sim \mathcal{X}}{\mathbb{E}}{ d\left ( \boldsymbol{t}_i,\boldsymbol{s}_i \right)}.
\label{eq2}
\end{aligned}
\end{equation}
For each input sample $\boldsymbol{x}_i \in \mathcal{X}$, the teacher and student models generate logits $\boldsymbol{t}_i=\mathcal{T}\left(\boldsymbol{x}_i;\theta_{\mathcal{T}}\right)$ and $\boldsymbol{s}_i=\mathcal{S}\left(\boldsymbol{x}_i;\theta_{\mathcal{S}}\right)$, respectively. $\boldsymbol{x}_i \sim \mathcal{X}$ means that a data point $\boldsymbol{x}_i$ sampled from the data space $\mathcal{X}$. 
In \Cref{eq2}, the intra-sample loss $\mathcal{L}_{\text{intra}}$ enforces alignment between the student logit $\boldsymbol{s}_i$ and its analytical solution $\boldsymbol{t}_i$. 
While such alignment improves local similarities, earlier research \cite{romero2014fitnets, park2019relational} reveals that exclusive reliance on intra-sample similarity risks overfitting as excessive minimization of the teacher-student gap may degrade generalization to unseen data.  
A phenomenon particularly evident in scenarios dominated by sample-specific optimization.

\begin{figure}[t]
    \centering
    \subfigure[$d(\mathbf{t}_i,\mathbf{s}_i)<d(\mathbf{t}_j,\mathbf{s}_i)$]{    \includegraphics[width=0.4\columnwidth]{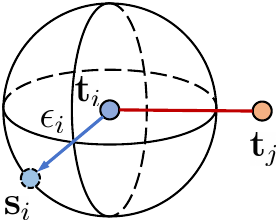}\label{fig3_a} 
    }
    \subfigure[$d(\mathbf{t}_i,\mathbf{s}_i)>d(\mathbf{t}_j,\mathbf{s}_i)$]{  \includegraphics[width=0.4\columnwidth]{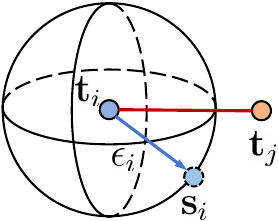}\label{fig3_b}
    }
    \caption{Visualization of the intra-sample alignment. For the sake of simplicity, we assume that the dimension of logits is 3. The spherical surface represents $\mathbf{s}_i$ satisfying the condition $ \epsilon_i=\mathbf{t}_i-\mathbf{s}_i $ for the fixed $ \epsilon_i$.}
    \label{fig3}
\end{figure}

\begin{figure*}[t] 
\centering
    \hfill
\subfigure[$\left(\boldsymbol{s}_i,\boldsymbol{t}_i,\boldsymbol{s}_j\right)$]{
        \includegraphics[width=0.6\columnwidth]{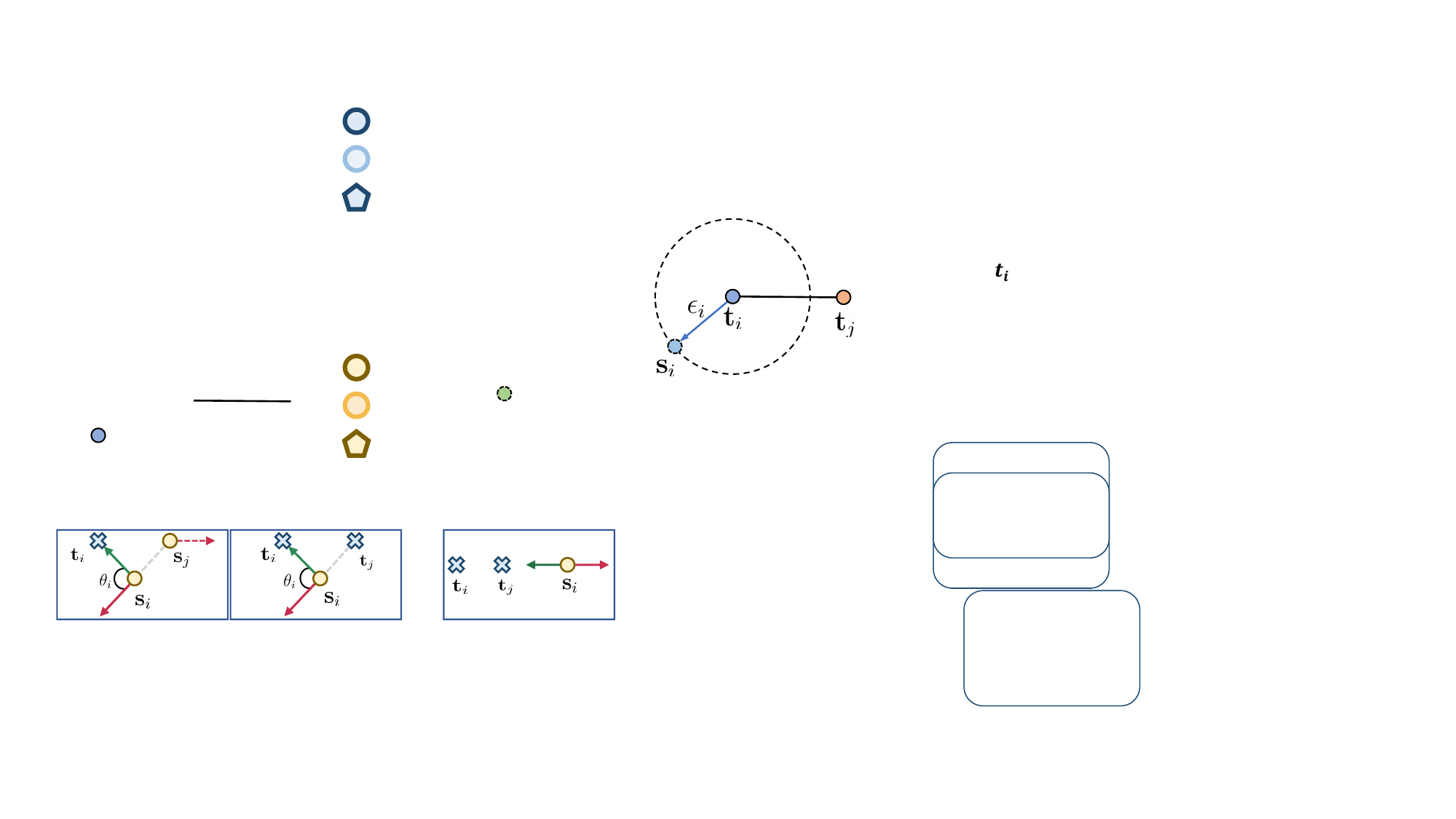}
        \label{fig4_a} 
      }
\subfigure[$\left(\boldsymbol{s}_i,\boldsymbol{t}_i,\boldsymbol{t}_j\right)$]{
        \includegraphics[width=0.6\columnwidth]{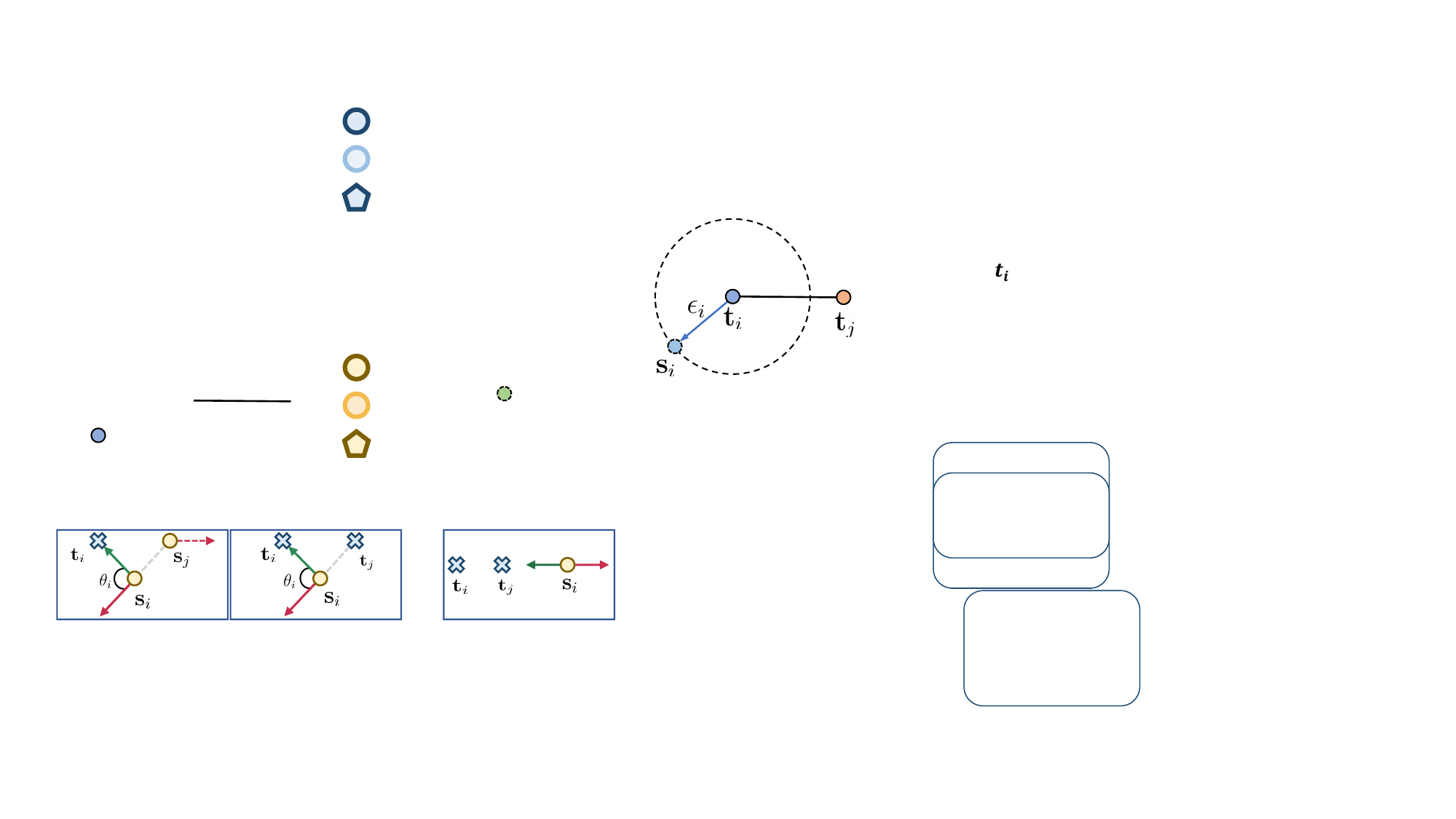}
        \label{fig4_b} 
      }
\subfigure[$\left(\boldsymbol{s}_i,\boldsymbol{t}_i,\boldsymbol{t}_j\right)$]{
        \includegraphics[width=0.6\columnwidth]{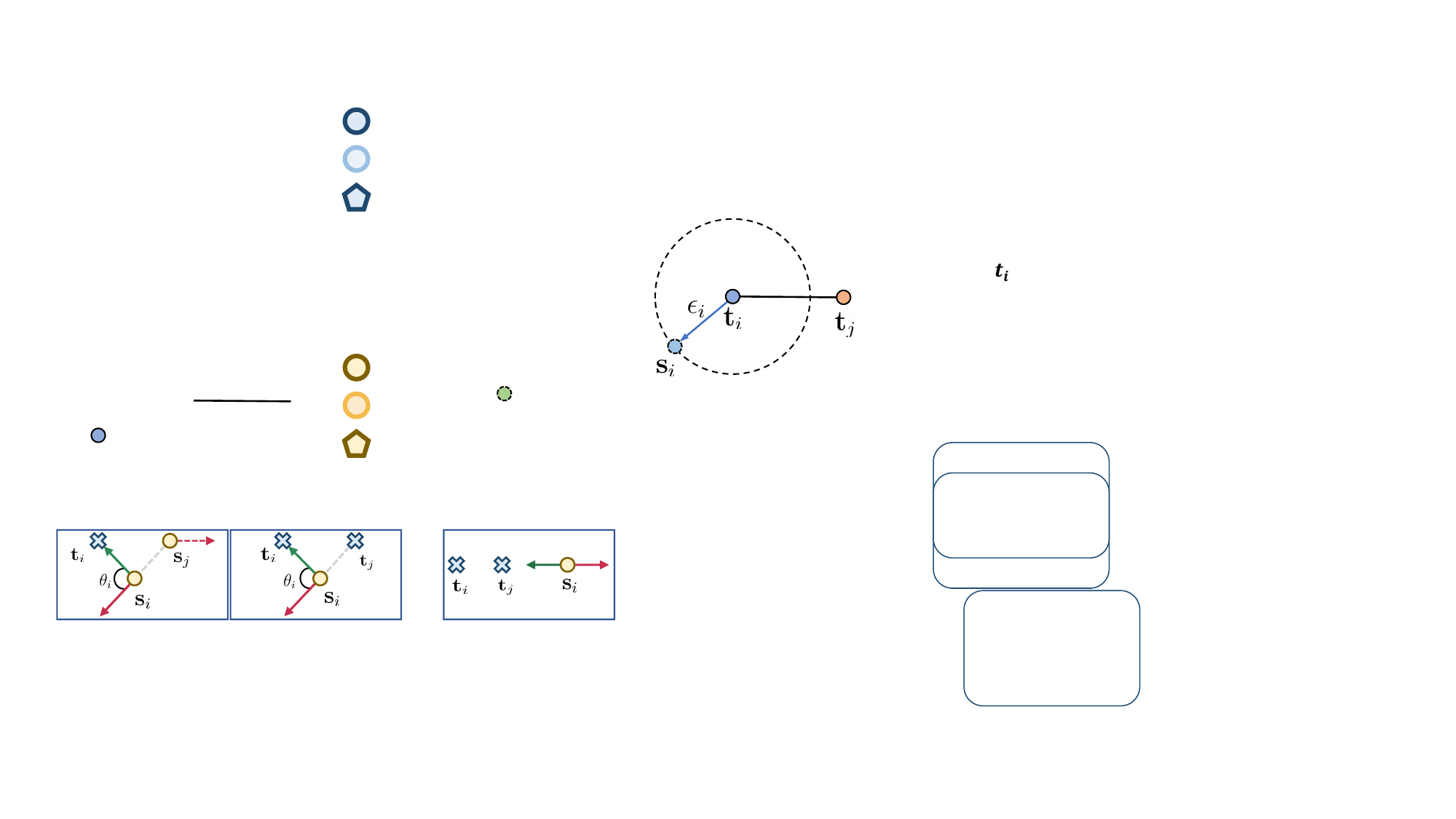}
        \label{fig4_c} 
      }
    \caption{Visualization of two categories of triples. (a) depicts $\left(\boldsymbol{s}_i,\boldsymbol{t}_i,\boldsymbol{s}_j\right)$, while (b) and (c) show $\left(\boldsymbol{s}_i,\boldsymbol{t}_i,\boldsymbol{t}_j\right)$. The blue line indicates that $\boldsymbol{s}_i$ approaches $\boldsymbol{t}_i$, and the red line represents that $\boldsymbol{s}_i$ is far from $\boldsymbol{s}_j$ or $\boldsymbol{t}_j$.}
    \label{fig4} 
\end{figure*}

\begin{figure}[t] 
\centering 
\includegraphics[width=1.0\columnwidth]{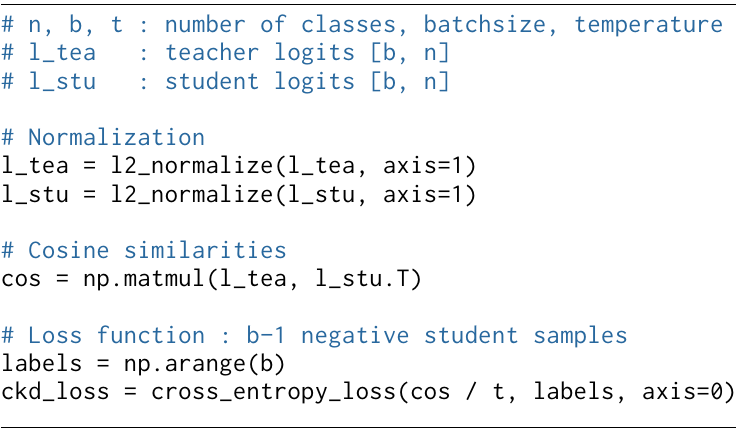} 
\vspace{-0.2cm}
\caption{Pseudo code of CKD in a Numpy-like style.} 
\label{fig5} 
\end{figure}

\Cref{fig3} visualizes the limitations of the intra-sample constraint by analyzing teacher logits $\boldsymbol{t}_i$ and student logits $\boldsymbol{t}_j$, where $\boldsymbol{t}_i\ne\boldsymbol{t}_j$. 
We define the alignment error for the $i$-th sample as $\epsilon_i = \boldsymbol{t}_i - \boldsymbol{s}_i$. 
While $\mathcal{L}_{\text{intra}}$ minimizes $d(\boldsymbol{t}_i,\boldsymbol{s}_i)$, it does not guarantee the inequality $\forall i, d(\boldsymbol{t}_i,\boldsymbol{s}_i)>d(\boldsymbol{t}_j,\boldsymbol{s}_i)$. 
Consequently, there may exist cases where $\epsilon_i$, such that ${\exists} i, d(\boldsymbol{t}_i,\boldsymbol{s}_i)>d(\boldsymbol{t}_j,\boldsymbol{s}_i)$. 
This inequality implies that the student logit $\boldsymbol{s}_i$ becomes closer to an incorrect teacher logit $\boldsymbol{t}_j$ than to its assigned target $\boldsymbol{t}_i$. 

Formally, we consider a family of hyperplanes $\mathcal{H}^{(k)}, k\in \mathbb{N}$ in the student logit space, where each $\boldsymbol{s}_i^{(k)}$ under $\mathcal{H}^{(k)}$ satisfies $\boldsymbol{s}_i^{(k)} \in \mathcal{H}^{(k)}, i\in \mathcal{D} $.
These hyperplanes share the same aggregated loss value across all samples as, 
\begin{equation}
\forall k,h \in  \mathbb{N}\;\;\sum_{i\in \mathcal{D}} d\left ( \boldsymbol{t}_i,\boldsymbol{s}_i^{(k)} \right) =  \sum_{i\in \mathcal{D}} d\left ( \boldsymbol{t}_i,\boldsymbol{s}_i^{(h)} \right).
\label{eq3}
\end{equation}
Each hyperplane $\mathcal{H}^{(k)}$ is parameterized as,
\begin{equation}
\mathcal{H}^{(k)}: \boldsymbol{w}^{(k)} \boldsymbol{s}^{(k)}_i + b^{(k)}  = 0, \; i \in \mathcal{D}, 
\label{eq4}
\end{equation}
where $\boldsymbol{w}^{(k)} \in \mathbb{R}^{c}$ is the normal vector and $b$ is the bias term.

Consider a binary classification scenario with a decision boundary defined by $y = x$. Suppose a teacher logs $\mathbf{t}_i=\left[0.5,0.5\right]$ corresponds to a sample near this boundary. Two student logits $\left[0.4,0.4\right]$ and $\left[0.6,0.6\right]$ yield identical intra-sample losses under mean squared error.
However, their geometric positions relative to the decision boundary differ critically. 
$\left[0.4,0.4\right]$ falls below the boundary, implying misclassification.
$\left[0.6,0.6\right]$ lies above the boundary, preserving correct classification.
These loss values expose a key limitation of intra-sample distillation that $\mathcal{L}_{intra}$ may preserve numerical similarities while degrading task-specific correctness. To mitigate such overfitting risks, we introduce supplementary constraints that enforce semantic consistency beyond intra-sample alignment.

\subsubsection{Inter-Sample Distillation}
Knowledge distillation should guarantee both numerical similarity and semantic consistency between teacher and student logits \cite{tian2020representation,bai2020feature}. However,  conventional intra-sample distillation focuses on the former by directly aligning paired logits. To address this limitation, we introduce inter-sample semantic distillation, which explicitly enforces structural relationships across samples.
The inter-sample constraint $\mathcal{L}_{\text{inter}}$ operates complementarily to $\mathcal{L}_{\text{intra}}$ through contrastive separation by pushing student logits of distinct samples apart in the embedding space.
Formally, $\mathcal{L}_{\text{inter}}$ is defined as,
\begin{equation}
\begin{aligned}
\mathcal{L}_{\text{inter}} 
&=  -\frac{1}{n(n-1)}\sum_{i=0}^{n} \sum_{j\ne i}^{n}  \left[ d\left ( \boldsymbol{s}_i,\boldsymbol{s}_j \right) + d\left ( \boldsymbol{s}_i,\boldsymbol{t}_j \right) \right]\\ 
& = 
-\underset{\begin{subarray}{c}
  {\boldsymbol{x}_i \sim \mathcal{X}},
  {\boldsymbol{x}_j \sim \mathcal{X}}
\end{subarray}}
{\mathbb{E}}{ d\left ( \boldsymbol{s}_i,\boldsymbol{s}_j \right)}-
\underset{\begin{subarray}{c}
  {\boldsymbol{x}_i \sim \mathcal{X}},
  {\boldsymbol{x}_j \sim \mathcal{X}}
\end{subarray}}
{\mathbb{E}}{ d\left ( \boldsymbol{s}_i,\boldsymbol{t}_j \right)}.
\label{eq5}
\end{aligned}
\end{equation}
The objective loss $\mathcal{L}_{\text{inter}}$ enforces dual separation that it pushes the student logit  $\boldsymbol{s}_i$ away from other student logits $\boldsymbol{s}_j$ and their corresponding teacher logits $\boldsymbol{t}_j$.
As shown in \Cref{fig3_b}, this constraint resolves cross-sample misalignment by preventing $\boldsymbol{s}_i$ from collapsing towards irrelevant teacher logits $\boldsymbol{t}_j$ and inter-sample redundancy by ensuring discriminative separation between distinct student logits $\boldsymbol{s}_i$ and $\boldsymbol{s}_j$.
\Cref{eq5} achieves this through simultaneous distance maximization between $\left ( \boldsymbol{s}_i,\boldsymbol{s}_j \right)$ and $\left ( \boldsymbol{s}_i,\boldsymbol{t}_j \right)$.

Generally, for each student logit $\boldsymbol{s}_i$, we construct inter-sample pairs include student-student pairs $\left(\boldsymbol{s}_i, \boldsymbol{s}_j\right)$ or student-teacher pairs $\left( \boldsymbol{s}_i, \boldsymbol{t}_j\right)$. 
We combine the intra-sample pair $\left(\boldsymbol{s}_i, \boldsymbol{t}_i\right)$ and obtain two types of triples $\left(\boldsymbol{s}_i, \boldsymbol{t}_i, \boldsymbol{s}_j\right)$ and $\left(\boldsymbol{s}_i, \boldsymbol{t}_i, \boldsymbol{t}_j\right)$. 
\Cref{fig4} visualizes these two triple categories. 
While previous triple-based methods \cite{tian2020representation, bai2020feature} leverage both triple types by updating all elements \cite{sohn2016improved,he2020momentum,chen2020simple}, our framework differs as the teacher model remains fixed with its outputs $\boldsymbol{t}_i$ unchanged during distillation. 
This constraint introduces a critical challenge that the direct adoption of traditional triplet losses designed for fully trainable models may lead to suboptimal alignment when applied to static teacher outputs. 

Our method exclusively adopts $\left(\boldsymbol{s}_i, \boldsymbol{t}_i, \boldsymbol{s}_j\right)$ for distillation. \Cref{fig4} briefly illustrates the underlying reasoning behind this selection. 
In \Cref{fig4_a}, the student logit $\boldsymbol{s}_i$ approaches $\boldsymbol{t}_i$ for intra-sample alignment while distancing from $\boldsymbol{s}_j$ for inter-sample separation. 
While gradient conflicts may emerge when the angle $\theta_i > \pi/2$, these can be mitigated by adjusting the trainable $\boldsymbol{s}_j$ during optimization. 
However, in \Cref{fig4_b}, the conflict remains unresolved due to the immutable nature of $\boldsymbol{t}_j$. 
In extreme cases\Cref{fig4_c}, opposing gradients from $\boldsymbol{t}_i$ and $\boldsymbol{t}_j$ destabilize training, as neither term can be adaptively regularized.
From \Cref{fig4}, we conclude that inter-sample interactions must involve trainable student logits to ensure stable optimization and that the triple $\left(\boldsymbol{s}_i, \boldsymbol{t}_i, \boldsymbol{t}_j\right)$ may induce gradient conflicts. Consequently, our method exclusively adopts  $\left(\boldsymbol{s}_i, \boldsymbol{t}_i, \boldsymbol{s}_j\right)$, and therefore formulate the inter-sample objective as the following equation,
\begin{equation}
\begin{aligned}
\mathcal{L}_{\text{inter}} 
&=  -\frac{1}{n(n-1)}\sum_{i=0}^{n} \sum_{j\ne i}^{n}  d\left ( \boldsymbol{s}_i,\boldsymbol{s}_j \right) \\ 
& = 
-\underset{\begin{subarray}{c}
  {\boldsymbol{x}_i \sim \mathcal{X}},
  {\boldsymbol{x}_j \sim \mathcal{X}}
\end{subarray}}
{\mathbb{E}}{ d\left ( \boldsymbol{s}_i,\boldsymbol{s}_j \right)}.
\label{eq6}
\end{aligned}
\end{equation}

Finally, we unify the intra-sample alignment constraint \Cref{eq2} and inter-sample separation constraint \Cref{eq6} through a weighted summation by using a balancing parameter $\beta$ as,
\begin{align}
\mathcal{L}_{\text{KD}} = \mathcal{L}_{\text{intra}} + \beta \mathcal{L}_{\text{inter}}.
\label{eq7}
\end{align}
The objective \Cref{eq7} achieves intra-sample maximization by enhancing the similarity between teacher and student logits for the same sample and inter-sample minimization by suppressing correlations between the logits of distinct samples. This is realized through contrastive learning with positive pairs pulled closer via $\mathcal{L}_{intra}$ and negative pairs pushed apart via $\mathcal{L}_{inter}$.

\subsubsection{Contrastive Formulation}
Notably, the proposed structure-preserving distillation framework exhibits an objective akin to contrastive learning \cite{oord2018representation,xu2020knowledge}. 
This connection motivates us to reformulate our objective into a sample-wise contrastive learning framework. Formally, $\mathcal{L}_{\text{KD}}$ can be expressed as,
\begin{align}
\begin{split}
\mathcal{L}_{\text{KD}}  &= 
\underset{\boldsymbol{x}_i \sim \mathcal{X}}{\mathbb{E}}
{   d\left ( \boldsymbol{t}_i,\boldsymbol{s}_i \right)} - 
\beta \underset{\begin{subarray}{c}
  {\boldsymbol{x}_i \sim \mathcal{X}} \\
  {\boldsymbol{x}_j \sim \mathcal{X}}
  \end{subarray}}{\mathbb{E}}{ d\left ( \boldsymbol{s}_i,\boldsymbol{s}_j \right) } .
\end{split}
\label{eq8}
\end{align} 
For simplicity, we employ the distance metric $f(\boldsymbol{u},\boldsymbol{v})$ to quantify the similarity between $\boldsymbol{u}$ and $\boldsymbol{v}$, defining $f(\boldsymbol{u},\boldsymbol{v}) = - d(\boldsymbol{u},\boldsymbol{v})$. The loss function can be derived as,
\begin{align}
\mathcal{L}_{\text{KD}}&=\underset{\boldsymbol{x}_i \sim \mathcal{X}}{\mathbb{E}}\left [-(f\left ( \boldsymbol{t}_i,\boldsymbol{s}_i \right)-\beta\underset{\boldsymbol{x}_j \sim \mathcal{X}}{\mathbb{E}}f\left ( \boldsymbol{s}_i,\boldsymbol{s}_j \right))\right ].
\label{eq9}
\end{align}
where higher values indicate greater similarity.  Employing \Cref{eq9} directly for optimization may result in gradient conflicts or diminish training efficiency. To rewrite it in the format of contrastive loss, we incorporate the natural logarithm for derivation as,

\begin{align}
\mathcal{L}_{\text{KD}}&=\underset{\boldsymbol{x}_i \sim \mathcal{X}}{\mathbb{E}}
\ln_{}\left [{
\frac{
\underset{\boldsymbol{x}_j \sim \mathcal{X}}{\mathbb{E}}e^{\beta f\left ( \boldsymbol{s}_i,\boldsymbol{s}_j \right)}
}{e^{ f\left ( \boldsymbol{t}_i,\boldsymbol{s}_i \right)}}
}  + 
\frac{e^{ f\left ( \boldsymbol{t}_i,\boldsymbol{s}_i \right)}}{
e^{ f\left ( \boldsymbol{t}_i,\boldsymbol{s}_i \right)}
}
-1
\right ]
\label{eq10}
\\
&\simeq \underset{\boldsymbol{x}_i \sim \mathcal{X}}{\mathbb{E}}\left [   -\ln_{}{\frac{e^{ f\left ( \boldsymbol{t}_i,\boldsymbol{s}_i \right)}}{e^{ f\left ( \boldsymbol{t}_i,\boldsymbol{s}_i \right)}+\underset{\boldsymbol{x}_j \sim \mathcal{X}}{\mathbb{E}}e^{\beta f\left ( \boldsymbol{s}_i,\boldsymbol{s}_j \right)}}}\right ]
\label{eq11}
\\
&\simeq \underset{\boldsymbol{x}_i \sim \mathcal{X}}{\mathbb{E}}\left [   -\ln_{}{\frac{e^{ f\left ( \boldsymbol{t}_i,\boldsymbol{s}_i \right)/\tau}}{e^{ f\left ( \boldsymbol{t}_i,\boldsymbol{s}_i \right)/\tau }+\underset{\boldsymbol{x}_j \sim \mathcal{X}}{\mathbb{E}}e^{\beta f\left ( \boldsymbol{s}_i,\boldsymbol{s}_j \right)/\tau}}}\right ].
\label{eq12}
\end{align}
Here, the teacher logit $\boldsymbol{t}_i$ and its corresponding student logit $\boldsymbol{s}_i$ form a positive pair, while $\boldsymbol{s}_i$ and $\boldsymbol{s}_j$ form a negative pair. Next, we will explore the benefits of this contrastive formulation.

\begin{figure*}[t] 
\centering
      \subfigure[Traditional class-wise contrast]{
            \includegraphics[width=0.6\columnwidth]{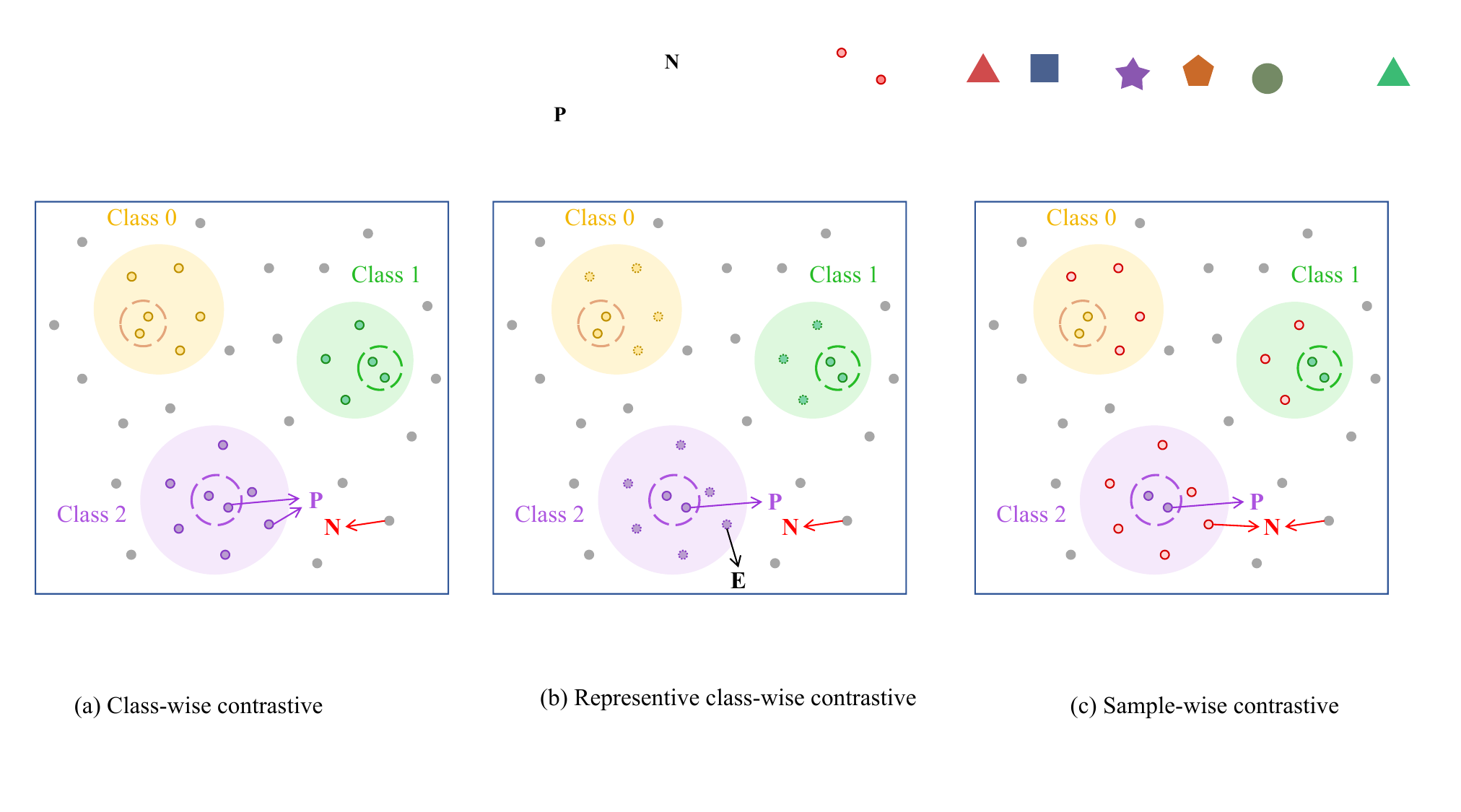}
            \label{fig6_a} 
          }
     \subfigure[Typical class-wise contrast: CRD]{
            \includegraphics[width=0.6\columnwidth]{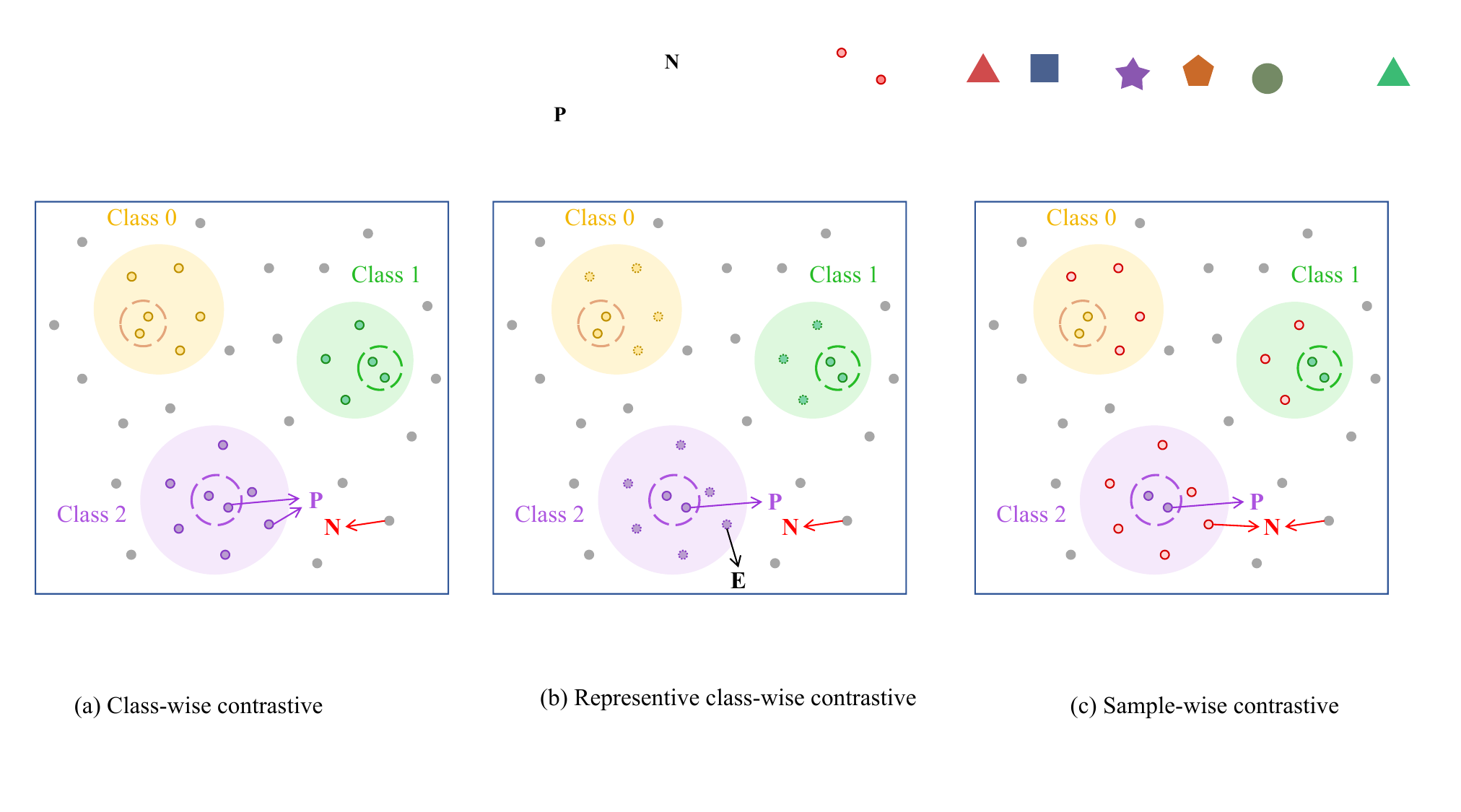}
            \label{fig6_b} 
        }
      \subfigure[Proposed sample-wise contrast]{
            \includegraphics[width=0.6\columnwidth]{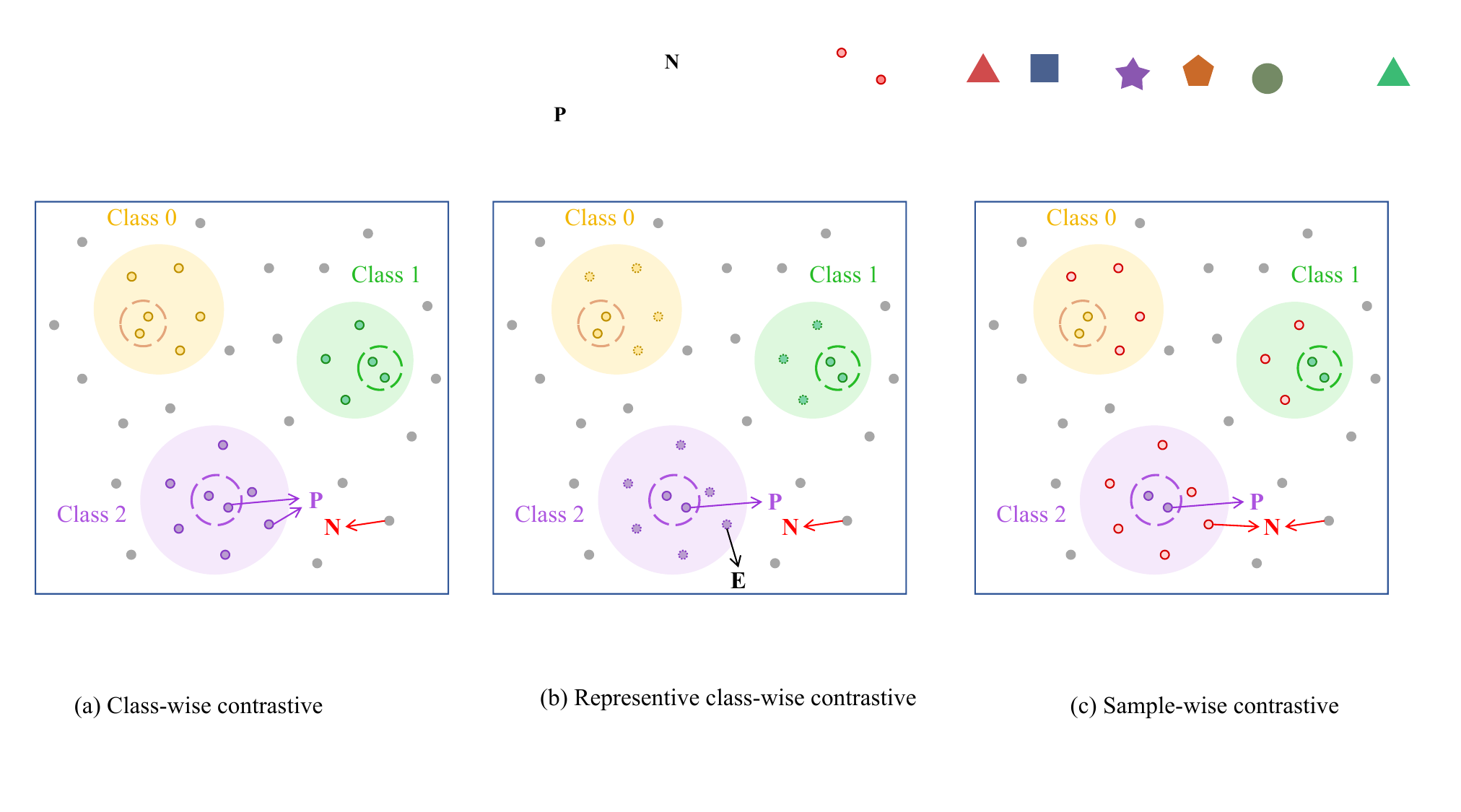}
            \label{fig6_c} 
          }
    \caption{Visualization of two categories of contrastive learning: Class-wise vs. Sample-wise. CRD is a typical class-wise formulation. Different background colors represent various sample categories, with P denoting positive samples and N denoting negative samples. The symbol E indicates that samples from the same class are excluded during training. The dashed circle contains teacher and student representations of the same sample}
    \label{fig6} 
\end{figure*}

As shown in \Cref{eq9}, to compute the expected value $f(\boldsymbol{s}_i, \boldsymbol{s}_j)$ for each $\boldsymbol{x}_i$, we sample negative pairs from the data space $\mathcal{X}$. 
Following vanilla KD \cite{hinton2015distilling}, we introduce a temperature parameter $\tau$ to soften the logit distribution defined in \Cref{eq12}. 
The parameter $\beta$ dynamically adjusts the trade-off between intra-sample alignment and inter-sample separation.
We assign larger $\beta$ values when semantic separation through negative pairs is critical and smaller values otherwise.
Note that this balance can also be achieved through adaptive negative sampling. 
For example, in scenarios with dominant negative spaces, we increase the number of sampled negative pairs to improve space estimation and weight adjustment. When negative pairs are less informative, we reduce their sampling frequency.
Contrastive learning enables efficient parallel processing of batch data.
In our formulation, we construct training batches of data samples, each containing one positive sample and $B-1$ negative samples, where $B$ denotes the batch size. 
As shown in \Cref{fig2}, the number of negative samples depends on $B$. 
Therefore, adjusting $B$ thus provides explicit control over the intra-inter sample balance.

Additionally, we replace $f\left ( \boldsymbol{t}_i,\boldsymbol{s}_j \right)$ with $f\left ( \boldsymbol{s}_i,\boldsymbol{s}_j \right)$. 
This substitution is justified by two key considerations. 
First, our method primarily aligns $\boldsymbol{s}_i$ and $\boldsymbol{t}_i$, such that $\boldsymbol{t}_i= \boldsymbol{s}_i$. We have $\boldsymbol{t}_i= \boldsymbol{s}_i\Rightarrow f\left ( \boldsymbol{t}_i,\boldsymbol{s}_j \right)=f\left ( \boldsymbol{s}_i,\boldsymbol{s}_j \right)$. 
Second, $f\left ( \boldsymbol{s}_i,\boldsymbol{s}_j \right)$ involves two trainable variables, whereas  $f\left ( \boldsymbol{t}_i,\boldsymbol{s}_j \right)$ depends only on $ \boldsymbol{s}_j$. This optimization simplifies gradient propagation through $ \boldsymbol{s}_j$. 
Specifically, we treat $\boldsymbol{t}_i$ as an anchor logit, and its positive logit $ \boldsymbol{s}_i$ and negative logit $ \boldsymbol{s}_j$ serve as learnable positive and negative logits, respectively. 
This design naturally connects with the InfoNCE loss \cite{oord2018representation}, enabling batch-efficient training through $B \times B $ (teacher, student) pairings per batch analogous to CLIP \cite{radford2021learning}.
Here, batch size $B$ controls the number of negative samples, balancing alignment accuracy and computational cost.

Consequently, we derive the final loss function as,
\begin{align}
\mathcal{L}_{\text{KD}}
&\simeq \underset{\boldsymbol{x}_i \sim \mathcal{X}}{\mathbb{E}}\left [-\ln{\frac{e^{ f\left ( \boldsymbol{t}_i,\boldsymbol{s}_i \right)/\tau}}{e^{ f\left ( \boldsymbol{t}_i,\boldsymbol{s}_i \right)/\tau }+ {\textstyle \sum\limits ^{B}}  e^{ f\left ( \boldsymbol{s}_i,\boldsymbol{s}_j \right)/\tau}}}\right ]
\label{eq13}
\\
&\simeq \underset{\boldsymbol{x}_i \sim \mathcal{X}}{\mathbb{E}}\left [-\ln{\frac{e^{ f\left ( \boldsymbol{t}_i,\boldsymbol{s}_i \right)/\tau}}{e^{ f\left ( \boldsymbol{t}_i,\boldsymbol{s}_i \right)/\tau }+ {\textstyle \sum\limits ^{B}}  e^{ f\left ( \boldsymbol{t}_i,\boldsymbol{s}_j \right)/\tau}}}\right ].
\label{eq14}
\end{align}

In \Cref{fig5}, we present the pseudo-code for the proposed CKD in a Numpy-like style.

\begin{theorem}
\label{throrm1}
Given triples $\left(\boldsymbol{t}_i,\boldsymbol{s}_i,\boldsymbol{s}_j\right)$, the loss function $\mathcal{L}_i$ is defined as,
\begin{align}
\mathcal{L}_i=-\ln{\frac{e^{ f\left ( \boldsymbol{t}_i,\boldsymbol{s}_i \right)/\tau}}{e^{ f\left ( \boldsymbol{t}_i,\boldsymbol{s}_i \right)/\tau }+ {\textstyle \sum\limits ^{B}}  e^{ f\left ( \boldsymbol{t}_i,\boldsymbol{s}_j \right)/\tau}}}.
\label{eq15}
\end{align}
The gradients $\nabla_{\boldsymbol{s}_i}\mathcal{L}_i$ and $\nabla_{\boldsymbol{s}_j}\mathcal{L}_i$ are proportional to $g_i$ that is formed as,
\begin{align}
g_i = 1-\frac{e^{ f\left ( \boldsymbol{t}_i,\boldsymbol{s}_i \right)/\tau}}{e^{ f\left ( \boldsymbol{t}_i,\boldsymbol{s}_i \right)/\tau }+ {\textstyle \sum\limits ^{B}}  e^{ f\left ( \boldsymbol{t}_i,\boldsymbol{s}_j \right)/\tau}}.
\label{eq16}
\end{align}
\end{theorem}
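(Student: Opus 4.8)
The plan is to read off both gradients by a single application of the chain rule, after rewriting $\mathcal{L}_i$ in additive form. Setting the partition function $Z_i = e^{f(\boldsymbol{t}_i,\boldsymbol{s}_i)/\tau} + \sum^{B} e^{f(\boldsymbol{t}_i,\boldsymbol{s}_j)/\tau}$, \Cref{eq15} becomes $\mathcal{L}_i = -f(\boldsymbol{t}_i,\boldsymbol{s}_i)/\tau + \ln Z_i$, and the quantity $p_i := e^{f(\boldsymbol{t}_i,\boldsymbol{s}_i)/\tau}/Z_i$ is exactly the softmax weight on the positive pair, so that $g_i = 1 - p_i$ by \Cref{eq16}. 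The whole claim then amounts to tracking where $p_i$ surfaces in the two derivatives; no structural property of $f$ beyond differentiability is needed.

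First I would compute $\nabla_{\boldsymbol{s}_i}\mathcal{L}_i$: the first term gives $-\tfrac{1}{\tau}\nabla_{\boldsymbol{s}_i} f(\boldsymbol{t}_i,\boldsymbol{s}_i)$, and since $\boldsymbol{s}_i$ enters $Z_i$ only through the positive exponential, $\nabla_{\boldsymbol{s}_i}\ln Z_i = \tfrac{1}{\tau}\,\tfrac{e^{f(\boldsymbol{t}_i,\boldsymbol{s}_i)/\tau}}{Z_i}\nabla_{\boldsymbol{s}_i} f(\boldsymbol{t}_i,\boldsymbol{s}_i) = \tfrac{1}{\tau}p_i\,\nabla_{\boldsymbol{s}_i} f(\boldsymbol{t}_i,\boldsymbol{s}_i)$; adding the two yields $\nabla_{\boldsymbol{s}_i}\mathcal{L}_i = -\tfrac{1}{\tau}(1-p_i)\nabla_{\boldsymbol{s}_i} f(\boldsymbol{t}_i,\boldsymbol{s}_i) = -\tfrac{1}{\tau}g_i\,\nabla_{\boldsymbol{s}_i} f(\boldsymbol{t}_i,\boldsymbol{s}_i)$, i.e. proportional to $g_i$ with vector prefactor $-\tfrac{1}{\tau}\nabla_{\boldsymbol{s}_i} f(\boldsymbol{t}_i,\boldsymbol{s}_i)$, which does not depend on the negatives. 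Next, for $\nabla_{\boldsymbol{s}_j}\mathcal{L}_i$, only $\ln Z_i$ depends on $\boldsymbol{s}_j$, through the single summand $e^{f(\boldsymbol{t}_i,\boldsymbol{s}_j)/\tau}$, so $\nabla_{\boldsymbol{s}_j}\mathcal{L}_i = \tfrac{1}{\tau}\,\tfrac{e^{f(\boldsymbol{t}_i,\boldsymbol{s}_j)/\tau}}{Z_i}\nabla_{\boldsymbol{s}_j} f(\boldsymbol{t}_i,\boldsymbol{s}_j)$. In the triple $(\boldsymbol{t}_i,\boldsymbol{s}_i,\boldsymbol{s}_j)$ with a single negative the prefactor $e^{f(\boldsymbol{t}_i,\boldsymbol{s}_j)/\tau}/Z_i$ equals $1-p_i = g_i$; with several negatives, summing over $j$ gives $\sum_j e^{f(\boldsymbol{t}_i,\boldsymbol{s}_j)/\tau}/Z_i = g_i$, so $g_i$ is again the common scalar controlling the negative-side update. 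Collecting the two computations proves the statement.

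The main obstacle — really the only delicate point — is the $\boldsymbol{s}_j$ case when $B>1$: there each per-negative gradient carries the individual softmax weight $e^{f(\boldsymbol{t}_i,\boldsymbol{s}_j)/\tau}/Z_i$ rather than $g_i$ itself, so "proportional to $g_i$" has to be read either as the single-negative triple statement or as a statement about the aggregated negative gradient $\sum_j \nabla_{\boldsymbol{s}_j}\mathcal{L}_i$. It is also worth stating explicitly that the vector prefactors $\nabla_{\boldsymbol{s}_i} f(\boldsymbol{t}_i,\boldsymbol{s}_i)$ and $\nabla_{\boldsymbol{s}_j} f(\boldsymbol{t}_i,\boldsymbol{s}_j)$ do depend on the chosen similarity $f$ (cosine similarity, negative squared distance, etc.), whereas the scalar $g_i$ does not — which is exactly the content worth isolating, since $g_i = 1-p_i$ is the confidence-gap term that modulates the gradient magnitude and makes the update vanish precisely as the positive pair comes to dominate the softmax.
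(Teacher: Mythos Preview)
Your derivation is correct. The paper itself does not give a proof of \Cref{throrm1}; it simply defers to the reference \cite{yeh2022decoupled} with the line ``The detailed proof can be found in \cite{yeh2022decoupled}.'' Your explicit chain-rule computation via the additive rewriting $\mathcal{L}_i=-f(\boldsymbol{t}_i,\boldsymbol{s}_i)/\tau+\ln Z_i$ is exactly the standard softmax-gradient argument that the cited work relies on, so in substance you are supplying what the paper omits rather than taking a different route. Your remark about the $B>1$ case --- that each individual negative gradient carries its own softmax weight $e^{f(\boldsymbol{t}_i,\boldsymbol{s}_j)/\tau}/Z_i$ and that $g_i$ only appears as their sum or in the single-negative reading --- is a genuine clarification the paper glosses over; it is worth keeping, since the theorem statement as written is slightly loose on this point.
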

\begin{proof}
\renewcommand{\qedsymbol}{}
The detailed proof can be found in \cite{yeh2022decoupled}.
\end{proof}

\begin{theorem}
\label{throrm2}
For class-wise contrastive learning, $\boldsymbol{x}_i$ and $\boldsymbol{x}_j$ belong to different classes and $f\left( \boldsymbol{t}_i, \boldsymbol{s}_i \right) - f\left( \boldsymbol{t}_i, \boldsymbol{s}_j \right)=\rho_j$. When decreasing $B$, the gradients $\nabla_{\boldsymbol{s}_i}\mathcal{L}_i$ also decreases.
\end{theorem}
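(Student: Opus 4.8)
The plan is to push everything through the scalar factor $g_i$ supplied by \Cref{throrm1} and then reduce the claim to a one-line monotonicity argument in $B$. First I would recall that \Cref{throrm1} tells us $\nabla_{\boldsymbol{s}_i}\mathcal{L}_i$ (and $\nabla_{\boldsymbol{s}_j}\mathcal{L}_i$) is proportional to
\[
g_i = 1-\frac{e^{ f\left ( \boldsymbol{t}_i,\boldsymbol{s}_i \right)/\tau}}{e^{ f\left ( \boldsymbol{t}_i,\boldsymbol{s}_i \right)/\tau }+ {\textstyle \sum^{B}}  e^{ f\left ( \boldsymbol{t}_i,\boldsymbol{s}_j \right)/\tau}}\in(0,1),
\]
with the update direction fixed by the triples and independent of $B$. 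Hence controlling the gradient magnitude is exactly controlling $g_i$, and it suffices to show $g_i$ shrinks when $B$ shrinks.

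Next I would substitute the class-wise hypothesis $f(\boldsymbol{t}_i,\boldsymbol{s}_i)-f(\boldsymbol{t}_i,\boldsymbol{s}_j)=\rho_j$, which gives $e^{f(\boldsymbol{t}_i,\boldsymbol{s}_j)/\tau}=e^{f(\boldsymbol{t}_i,\boldsymbol{s}_i)/\tau}\,e^{-\rho_j/\tau}$. Factoring $e^{f(\boldsymbol{t}_i,\boldsymbol{s}_i)/\tau}$ out of numerator and denominator collapses the expression to
\[
g_i = \frac{Z_B}{1+Z_B},\qquad Z_B := \sum_{j=1}^{B} e^{-\rho_j/\tau},
\]
which no longer involves the positive term at all. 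Since every summand $e^{-\rho_j/\tau}$ is strictly positive, $Z_B$ is strictly increasing in the number of negatives (here I would be explicit that ``decreasing $B$'' means passing to a nested sub-collection of negatives, so the $\rho_j$ already present are untouched and we merely drop terms from the sum). Finally, because $h(z)=z/(1+z)$ has $h'(z)=(1+z)^{-2}>0$, the map $z\mapsto h(z)$ is strictly increasing, so $g_i$ inherits the monotonicity of $Z_B$: decreasing $B$ decreases $g_i$, hence decreases $\lVert\nabla_{\boldsymbol{s}_i}\mathcal{L}_i\rVert$, and likewise $\lVert\nabla_{\boldsymbol{s}_j}\mathcal{L}_i\rVert$ by the shared factor in \Cref{throrm1}.

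There is no real obstacle here — the whole argument is a two-step monotonicity computation once \Cref{throrm1} is granted. The only things needing care are bookkeeping: (i) making precise that ``decreasing $B$'' refers to a nested family of negative sets so the per-pair gaps $\rho_j$ stay fixed; (ii) reading ``the gradient decreases'' as its norm (equivalently its coefficient $g_i$) decreasing, since the direction is $B$-independent; and (iii) flagging that the clean reduction $g_i=Z_B/(1+Z_B)$ relies essentially on the class-wise equal-gap assumption — in the sample-wise regime the analogous quantities would vary with $\boldsymbol{s}_i$ and this factorization would break, which is precisely the distinction the paper wishes to highlight.
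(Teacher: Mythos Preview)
Your proposal is correct and follows the paper's route: invoke \Cref{throrm1} to reduce to the scalar $g_i$, substitute $\rho_j$ to obtain $g_i = 1 - 1/(1+\sum^{B} e^{-\rho_j/\tau})$, and then argue monotonicity in $B$. The only difference is in the final step: the paper approximates the sum by $B\cdot e^{-\rho_j/\tau}$ and appeals (somewhat loosely) both to $B$ shrinking and to $\rho_j$ growing under sparser sampling, whereas you keep the exact sum $Z_B$ and observe directly that dropping positive summands makes $Z_B$, and hence $h(Z_B)=Z_B/(1+Z_B)$, smaller. Your version is tighter and avoids the unnecessary approximation; the paper's version, by making the dependence on $B$ explicit as a multiplicative factor, foregrounds the heuristic that gradient magnitude scales roughly linearly with batch size, which feeds the narrative contrast with \Cref{throrm3}.
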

\begin{proof}
\renewcommand{\qedsymbol}{}
From \Cref{eq16}, we can obtain following equations,
\begin{align}
g_i 
&= 1-\frac{e^{ f\left ( \boldsymbol{t}_i,\boldsymbol{s}_i \right)/\tau}}{e^{ f\left ( \boldsymbol{t}_i,\boldsymbol{s}_i \right)/\tau }+ {\textstyle \sum\limits ^{B}}  e^{ f\left ( \boldsymbol{t}_i,\boldsymbol{s}_j \right)/\tau}}
\label{eq17}
\\
&= 1-\frac{1}{1+ {\textstyle \sum\limits ^{B}}  e^{ -\rho_j/ \tau} }
\label{eq18}
\\
&\simeq 1-\frac{1}{1+ B \times e^{ -\rho_j/ \tau} }.
\label{eq19}
\end{align}
When the batch size $B$ decreases, fewer negative samples are sampled, leading to an increase in $\rho_j$ and a decrease in $g_i$. This, in turn, results in a reduction of gradients $\nabla_{\boldsymbol{s}_i}\mathcal{L}_i$.
\end{proof}

As proven in \Cref{throrm2}, when positive and negative samples exhibit weak correlations, the similarity measure 
$f(\boldsymbol{t}_i,\boldsymbol{s}_j)$ approaches its minimum. This suppresses the backpropagation gradient magnitude
$\nabla_{\boldsymbol{s}_i}\mathcal{L}_i$, leading to inefficient model parameter updates. 
This limitation forces class-wise contrastive methods \cite{sohn2016improved,he2020momentum} to rely on large batch sizes for gradient stabilization by amplifying signal-to-noise ratios through abundant negative samples and for correlation enhancement by mitigating weak correlation scenarios via batch sampling.
Such strategies, while effective, incur substantial computational costs in contrastive learning \cite{yeh2022decoupled}.

\begin{algorithm}[t]
\caption{Training procedure of the proposed CKD}
\small
\label{alg1}
\DontPrintSemicolon
\SetAlgoLined
\KwIn{Images $\mathcal{X}={\{\boldsymbol{x}_i\}_{i=1}^{n-1}}$, annotations $\mathcal{Y}={\{y_i\}_{i=1}^{n-1}}$, and the teacher model $\mathcal{T}(\mathcal{X};\theta_{\mathcal{T}})$  
}
\KwOut{The student model $\mathcal{S}(\mathcal{X};\theta_{\mathcal{S}})$}
 \For{epoch $\in \{1,2,\ldots,\text{Epoch}\}$}{
 \While{Randomly sample a batch $\boldsymbol{B} \in \mathcal{X}$ and $\boldsymbol{y} \in \mathcal{Y}$}
  {
    \emph{\% Compute teacher and student logits}\;
    $\boldsymbol{t}_{B} = \mathcal{T}(\boldsymbol{B};\theta_{\mathcal{T}})$\;
    $\boldsymbol{s}_{B}= \mathcal{S}(\boldsymbol{B};\theta_{\mathcal{S}})$\;
   
    \emph{\% Compute knowledge distillation loss}\;
    $\boldsymbol{t^{\prime}}_{B} = $ Normalize($\boldsymbol{t}_{B}$)\;
    $\boldsymbol{s^{\prime}}_{B} = $ Normalize($\boldsymbol{s}_{B}$)\;
    $\mathcal{L}_{\text{KD}}\gets$  SampleWiseContrastive($\boldsymbol{s^{\prime}}_{B}, \boldsymbol{t^{\prime}}_{B}$) \; 
    
    \emph{\% Compute classification loss}\;
    $\boldsymbol{\bar{y}} \gets $ Softmax($\boldsymbol{s}_B$)\;
    $\mathcal{L}_{\text{Task}} \gets $ CrossEntropy($\boldsymbol{\bar{y}}$, $\boldsymbol{y}$)\;
    
    \emph{\% Apply the loss in \Cref{eq7}}\;
    $\mathcal{L} \gets  \mathcal{L}_{\text{Task}} + \alpha\mathcal{L}_{\text{KD}}$ \;
    $\theta_{\mathcal{S}} \gets \theta_{\mathcal{S}} - \eta \  \partial \mathcal{L} / \partial \theta_{\mathcal{S}}$ \;
  }
}
\end{algorithm}

\begin{theorem}
\label{throrm3}
For sample-wise contrastive learning, $\exists j$ where $\boldsymbol{x}_i$ and $\boldsymbol{x}_j$ share the same class label, such that $\exists j, \;  f\left( \boldsymbol{t}_i, \boldsymbol{s}_i \right) \sim f\left( \boldsymbol{t}_i, \boldsymbol{s}_j \right)$, large batch sizes are not crucial. 
\end{theorem}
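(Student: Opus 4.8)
\textbf{Proof proposal for Theorem \ref{throrm3}.}

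The plan is to mirror the calculation already used in the proof of Theorem \ref{throrm2}, but to track what changes when some negative $\boldsymbol{x}_j$ shares the class label of the anchor $\boldsymbol{x}_i$. Starting from the gradient factor $g_i$ in \Cref{eq16}, I would rewrite the denominator by splitting the sum $\sum^{B} e^{f(\boldsymbol{t}_i,\boldsymbol{s}_j)/\tau}$ into a term (or terms) coming from same-class indices $j$ and a term from genuinely different-class indices. For a same-class index $j$, the hypothesis $f(\boldsymbol{t}_i,\boldsymbol{s}_i)\sim f(\boldsymbol{t}_i,\boldsymbol{s}_j)$ means the corresponding margin $\rho_j = f(\boldsymbol{t}_i,\boldsymbol{s}_i)-f(\boldsymbol{t}_i,\boldsymbol{s}_j)\approx 0$, so $e^{-\rho_j/\tau}\approx 1$, whereas for different-class indices $\rho_j$ is bounded away from $0$ and $e^{-\rho_j/\tau}$ is small. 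Substituting into the analogue of \Cref{eq18}–\Cref{eq19} gives $g_i \approx 1 - \dfrac{1}{1 + m + (B-m)\,e^{-\rho/\tau}}$, where $m\ge 1$ counts the same-class negatives and $\rho$ abbreviates a representative different-class margin. The key observation is that the additive constant $m$ in the denominator is already $\Theta(1)$ and positive even when $B$ is small, so $g_i$ — and hence $\|\nabla_{\boldsymbol{s}_i}\mathcal{L}_i\|$, which is proportional to $g_i$ by Theorem \ref{throrm1} — stays bounded below by a constant independent of $B$. This is exactly the sense in which "large batch sizes are not crucial": the gradient does not vanish as $B\to$ small, in sharp contrast to the $g_i\simeq 1-\tfrac{1}{1+B e^{-\rho/\tau}}\to 0$ behavior of Theorem \ref{throrm2}.

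Concretely, the steps I would carry out in order are: (1) invoke Theorem \ref{throrm1} to reduce the claim about $\nabla_{\boldsymbol{s}_i}\mathcal{L}_i$ to a claim about $g_i$; (2) partition the index set of negatives into $\mathcal{P}=\{j : y_j = y_i\}$ and its complement, and factor $e^{f(\boldsymbol{t}_i,\boldsymbol{s}_i)/\tau}$ out of numerator and denominator to express $g_i$ purely in terms of the margins $\rho_j$; (3) apply the hypothesis $\rho_j\approx 0$ for $j\in\mathcal{P}$ and a uniform lower bound $\rho_j \ge \rho_{\min} > 0$ for $j\notin\mathcal{P}$ to obtain the two-sided estimate $1 - \tfrac{1}{1+|\mathcal{P}|} \ge g_i \gtrsim 1 - \tfrac{1}{1+|\mathcal{P}| + (B-|\mathcal{P}|)e^{-\rho_{\min}/\tau}}$; (4) conclude that since $|\mathcal{P}|\ge 1$, the lower bound $1-\tfrac{1}{1+|\mathcal{P}|} \ge \tfrac12$ holds uniformly in $B$, so shrinking $B$ cannot drive the gradient to zero; therefore a large batch is not needed for gradient stabilization, and contrast with Theorem \ref{throrm2} to make the dichotomy explicit.

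The main obstacle I anticipate is making the informal relation "$f(\boldsymbol{t}_i,\boldsymbol{s}_i)\sim f(\boldsymbol{t}_i,\boldsymbol{s}_j)$" precise enough to support a quantitative bound, since the statement uses $\sim$ loosely; I would fix this by reading it as "$\rho_j = O(\tau)$", or equivalently "$e^{-\rho_j/\tau} = \Theta(1)$", which is the weakest reading that still makes the argument go through and is consistent with how the approximation steps in \Cref{eq18}–\Cref{eq19} are handled earlier in the paper. A secondary, more conceptual point to address carefully is \emph{why} same-class negatives have near-zero margin here: this follows because the anchor is the \emph{teacher} logit $\boldsymbol{t}_i$ and both $\boldsymbol{s}_i$ and $\boldsymbol{s}_j$ are student logits of same-class inputs, which (after the $\ell_2$ normalization in Algorithm \ref{alg1}) the teacher maps to nearly the same point — so I would add one sentence justifying $f(\boldsymbol{t}_i,\boldsymbol{s}_i)\approx f(\boldsymbol{t}_i,\boldsymbol{s}_j)$ from class-consistency of logits rather than treating it as a bare hypothesis. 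Everything else is the same bounded-sum estimate already rehearsed in Theorem \ref{throrm2}, just with the sign of the conclusion flipped, so no new machinery is required.
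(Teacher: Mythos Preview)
Your proposal takes essentially the same route as the paper's own proof: both start from the expression for $g_i$ in \Cref{eq16}--\Cref{eq19}, observe that a same-class negative satisfies $\rho_j\approx 0$ so $e^{-\rho_j/\tau}\approx 1$, and conclude that the denominator term remains order-one even for small $B$, whence the gradient does not vanish; the paper's argument is simply a one-line version of your partitioned estimate. One small slip to fix in your write-up: the two-sided inequality in step~(3) has its direction reversed, since $g_i$ is \emph{increasing} in $\sum_j e^{-\rho_j/\tau}$, so $1-\tfrac{1}{1+|\mathcal{P}|}$ is the \emph{lower} bound on $g_i$ (exactly as you then use it in step~(4)) and $1-\tfrac{1}{1+|\mathcal{P}|+(B-|\mathcal{P}|)e^{-\rho_{\min}/\tau}}$ is the upper bound.
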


\begin{proof}
\renewcommand{\qedsymbol}{}
For sample-wise triples where $\exists j, f\left( \boldsymbol{t}_i, \boldsymbol{s}_i \right) \sim f\left( \boldsymbol{t}_i, \boldsymbol{s}_j \right)$, we can obtain $ \rho_j\sim 0$ and a larger $B \times e^{ -\rho_j}$ is achieved, when sampling negative samples from the same class. Our method eliminates the need for comprehensive negative space sampling with large batch sizes. This finding aligns with the theoretical insights established by Robinson et al.\cite{robinson2020contrastive}, who demonstrated that difficult negative samples that are close to the anchor can provide more information and accelerate the gradient update.
\end{proof}

Some works have revealed that each image is distinctive and can vary significantly from other images within the same semantic class \cite{wu2018unsupervised, malisiewicz2011ensemble}. It suggests that sample-wise contrastive learning could potentially outperform class-wise approaches by preserving intrinsic structural information in this case. Specifically, class-wise methods, which enforce uniform aggregation of intra-class samples toward a centroid, risk distorting the natural semantic structures inherent in the data.

Other research suggests that negative samples in the same class would harm contrastive learning, as the distance between same-class samples may become excessively large \cite{khosla2020supervised}. Our method addresses this challenge through teacher-guided structural constraints.
When feature representations $\boldsymbol{s}_i$ and $\boldsymbol{s}_j$ from the same class move away from each other, their respective convergence towards corresponding teacher anchors $\boldsymbol{t}_i$ and $\boldsymbol{t}_j$ effectively constrains their relative positions. Consequently, this process leads to the alignment between the student’s intra-class semantic structure and that of the teacher.

\Cref{fig6} illustrates the differences between class-wise and sample-wise contrastive paradigms. 
Traditional class-wise approaches like CRD \cite{tian2020representation} employ memory banks to construct negative pairs through class exclusion, i.e., $\mathbf{x}_j \in \mathcal{C}_k$ where $k\neq y_i$ and $\mathcal{C}_k$ is $k$-th category of samples with
\begin{equation}
\mathbb{E}\left[f\left(\boldsymbol{t}_i, \boldsymbol{s}_j\right)\right] \rightarrow 0 \quad \forall j \in \mathcal{C}_{k \neq y_i}.
\end{equation}
The class-wise formulation generates negative pairs from different classes, and these negative samples often exhibit low similarity due to diverse semantic content, forcing gradient enhancement through batch size scaling $g_i \propto B $.
In contrast, the sample-wise formulation may sample negative pairs from the same class as,
\begin{equation}
\mathbb{E}\left[f\left(\boldsymbol{t}_i, \boldsymbol{s}_j\right)\right] \gg 0 .
\end{equation}
This non-zero similarity naturally amplifies gradient signals without batch size dependency.
Therefore, our method does not require a large batch size.
In fact, the proposed method exhibits batch sensitivity. 
Excessive $B$ introduces over-regularization while Insufficient $B$ empirically degrades convergence stability.

\begin{table*}[t]
    \centering
    \caption{Classification accuracy (\%) on the CIFAR-100 validation set. "Homogeneous Architectures" and "Heterogeneous Architectures" represent whether teacher and student networks share the same or different architectures, respectively. $\mathrm{\Delta}$ denotes the improved performance compared to vanilla KD. All results are the average over five trials.}
    \footnotesize
    % \small
    % \renewcommand\arraystretch{1.1}
    \resizebox{1.0\linewidth}{!}{
    \begin{tabular}{l@{\hspace{2em}}cccccc@{\hspace{3em}}ccccc}
    \toprule
         Method & \multicolumn{6}{c}{Homogeneous Architectures} & \multicolumn{5}{c}{Heterogeneous Architectures} \\ 
    \midrule
         Teacher & RN56 & RN110 & RN32$\times$4 & W-40-2 & W-40-2 & VGG13 & RN32$\times$4 & W-40-2 & VGG13 & RN50 & RN32$\times$4 \\
         Acc & 72.34 & 74.31 & 79.42 & 75.61 & 75.61 & 74.64 & 79.42 & 75.61 & 74.64 & 79.34 & 79.42 \\
    \midrule
    
         Student & RN20 & RN32 & RN8$\times$4 & W-16-2 & W-40-1 & VGG8 & SN-V1 & SN-V1 & MN-V2 & MN-V2 & SN-V2\\
         Acc & 69.06 & 71.14 & 72.50 & 73.26 & 71.98 & 70.36 & 70.50 & 70.50 & 64.60 & 64.60 & 71.82 \\
    \midrule
         
         \rowcolor[rgb]{0.8,0.8,0.8} 
         \multicolumn{12}{l}{\emph{Feature-based Distillation}} 
         \vspace{2pt} \\
         % \emph{Feature-based} & \multicolumn{6}{c}{} & \multicolumn{5}{c}{} \\
         FitNet\cite{romero2014fitnets} & 69.21 & 71
         06 & 73.50 & 73.58 & 72.24 & 71.02 & 73.59 & 73.73 & 64.14 & 63.16 & 73.54 \\
         AT\cite{komodakis2017paying} & 70.55 & 72.31 & 73.44 & 74.08 & 72.77 & 71.43 & 71.73 & 73.32 & 59.40 & 58.58 & 72.73 \\
         RKD\cite{park2019relational} & 69.61 & 71.82 & 71.90 & 73.35 & 72.22 & 71.48 & 72.28 & 72.21 & 64.52 & 64.43 & 73.21 \\
         CRD\cite{tian2020representation} & 71.16 & 73.48 & 75.51 & 75.48 & 74.14 & 73.94 & 75.11 & 76.05 & 69.73 & 69.11 & 75.65 \\
         OFD\cite{heo2019comprehensive} & 70.98 & 73.23 & 74.95 & 75.24 & 74.33 & 73.95 & 75.98 & 75.85 & 69.48 & 69.04 & 76.82 \\
         ReviewKD\cite{chen2021distilling} & 71.89 & 73.89 & 75.63 & 76.12 & 75.09 & 74.84 & 77.45 & 77.14 & \textbf{70.37} & 69.89 & 77.78 \\
    \midrule
          \rowcolor[rgb]{0.8,0.8,0.8} 
          \multicolumn{12}{l}{\emph{Logits-based Distillation}} \vspace{2pt} \\
          KD\cite{hinton2015distilling} & 70.66 & 73.08 & 73.33 & 74.92 & 73.54 & 72.98 & 74.07 & 74.83 & 67.37 & 67.35 & 74.45 \\
          DML\cite{zhang2018deep} & 69.52 & 72.03 & 72.12 & 73.58 & 72.68 & 71.79 & 72.89 & 72.76 & 65.63 & 65.71 & 73.45 \\
          TAKD\cite{mirzadeh2020improved} & 70.83 & 73.37 & 73.81 & 75.12 & 73.78 & 73.23 & 74.53 & 75.34 & 67.91 & 68.02 & 74.82 \\
          DKD\cite{zhao2022decoupled} & 71.97 & 74.11 & 76.32 & 76.24 & 74.81 & 74.68 & 76.45 & 76.70 & 69.71 & 70.35 & 77.07 \\
          \textbf{CKD} & \textbf{72.12} & \textbf{74.48} & \textbf{76.76} & \textbf{76.28} & \textbf{75.14} & \textbf{74.98} & \textbf{77.72} & \textbf{77.47} & 70.11 & \textbf{70.62} & \textbf{78.18} \\
          $\mathrm{\Delta}$ & \textcolor[RGB]{0, 166, 79}{+1.46} & \textcolor[RGB]{0, 166, 79}{+1.40} & \textcolor[RGB]{0, 166, 79}{+3.43} & \textcolor[RGB]{0, 166, 79}{+1.36} & \textcolor[RGB]{0, 166, 79}{+1.60} & \textcolor[RGB]{0, 166, 79}{+2.00} &  \textcolor[RGB]{0, 166, 79}{+3.65} & \textcolor[RGB]{0, 166, 79}{+2.64} & \textcolor[RGB]{0, 166, 79}{+2.74} & \textcolor[RGB]{0, 166, 79}{+3.27} & \textcolor[RGB]{0, 166, 79}{+3.73} \\
    \bottomrule
    \end{tabular}
    }
    \label{tab1}
\end{table*}

\begin{table*}[t]
    \centering
   % \footnotesize
   % \small
    % \renewcommand\arraystretch{1.1}
    \caption{Top-1 and Top-5 accuracy (\%) on the ImageNet-1K validation set.(a) ResNet34 serves as the teacher, while ResNet18 acts as the student. (b) ResNet50 serves as the teacher, while MobileNetV1 acts as the student. $\mathrm{\Delta}$ denotes the performance improvement compared to vanilla KD. All results are the average over three trials.}
    % \resizebox{\linewidth}{!}{
    \begin{tabular}{cccc|cccc|cccc}
    \toprule
    % \hline
        \multicolumn{4}{c}{Baselines} & \multicolumn{4}{|c|}{Feature-based Distillation} & \multicolumn{4}{c}{Logits-based Distillation} \\
    \midrule
    % \hline
        Setting & Metric & Teacher & Student & AT\cite{Zagoruyko2016PayingMA} & OFD\cite{heo2019comprehensive} & CRD\cite{tian2020representation} & ReviewKD\cite{chen2021distilling} & KD\cite{hinton2015distilling} & DKD\cite{zhao2022decoupled} & \textbf{CKD} & $\mathrm{\Delta}$ \\
     \midrule
    % \hline
         \multirow{2}{*}{(a)} & Top-1 & 73.31 & 69.75 & 70.69 & 70.81 & 71.17 & 71.61 & 70.66 & 71.70 & \textbf{72.24} & \textcolor[RGB]{0, 166, 79}{+1.58} \\
         & Top-5 & 91.42 & 89.07 & 90.01 & 89.98 & 90.13 & 90.51 & 89.88 & 90.41 & \textbf{90.81} & \textcolor[RGB]{0, 166, 79}{+0.93} \\
     \midrule
    % \hline
         \multirow{2}{*}{(b)} & Top-1 & 76.16 & 68.87 & 69.56 & 71.25 & 71.37 & 72.56 & 68.58 & 72.05 & \textbf{72.97} & \textcolor[RGB]{0, 166, 79}{+4.39} \\
         & Top-5 & 92.86 & 88.76 & 89.33 & 90.34 & 90.41 & 91.00 & 88.98 & 91.05 & \textbf{91.36} & \textcolor[RGB]{0, 166, 79}{+2.38} \\
    \bottomrule
    % \hline
    \end{tabular}
    % }
    \label{tab2}
\end{table*}

\begin{table*}[t]
    \centering
   % \footnotesize
   % \small
    % \renewcommand\arraystretch{1.1}
    \caption{Top-1 and Top-5 accuracy (\%) on the Places365 validation set.(a) ResNet34 serves as the teacher, while ResNet18 acts as the student. (b) ResNet50 serves as the teacher, while MobileNetV1 acts as the student. $\mathrm{\Delta}$ denotes the performance improvement compared to vanilla KD. All results are the average over three trials.}
    % \resizebox{\linewidth}{!}{
    \begin{tabular}{cccc|cccc|cccc}
    \toprule
    % \hline
        \multicolumn{4}{c|}{Baselines} & \multicolumn{4}{c|}{Feature-based Distillation} & \multicolumn{4}{c}{Logits-based Distillation} \\
    \midrule
    % \hline
        Setting & Metric & Teacher & Student & AT\cite{Zagoruyko2016PayingMA} & OFD\cite{heo2019comprehensive} & CRD\cite{tian2020representation} & ReviewKD\cite{chen2021distilling} & KD\cite{hinton2015distilling} & DKD\cite{zhao2022decoupled} & \textbf{CKD} & $\mathrm{\Delta}$ \\
    \midrule
         \multirow{2}{*}{(a)} & Top-1 & 54.57 & 54.07 & 54.62 & 55.22 & 54.92 & 55.33 & 55.30 & 55.43 & \textbf{55.47} & \textcolor[RGB]{0, 166, 79}{+0.17} \\
         & Top-5 & 85.05 & 84.44 & 84.90 & 85.25 & 85.15 & 85.42 & \textbf{85.49} & 85.35 & 85.38 & \textcolor[RGB]{0, 166, 79}{-0.11} \\
    \midrule
         \multirow{2}{*}{(b)} & Top-1 & 55.51 & 51.74 & 52.05 & 55.19 & 54.71 & 54.83  & 53.07 & 55.25 & \textbf{55.51} & \textcolor[RGB]{0, 166, 79}{+2.47} \\
         & Top-5 & 85.51 & 82.29 & 82.67 & 85.16 & 84.86 & 85.13  & 83.48 & \textbf{85.50} & 85.41 & \textcolor[RGB]{0, 166, 79}{+1.93} \\

    \bottomrule
    \end{tabular}
    % }
    \label{tab3}
\end{table*}

Given the logit matrix $\boldsymbol{T} \in \mathbb{R}^{c\times n}$, and assume that there are $c$ classes, each class containing $m$ samples. The negative logit space $\boldsymbol{T}_{/i} \in \mathbb{R}^{c\times (n-1)}$ for the $i$-th sample satisfies,
\begin{align}
\text{rank}\left(\boldsymbol{T}_{/i} \right) \le \min{\left(c, n-1\right)} = c,  \; \textit{ s.t. }  c \ll n.
\label{eq20}
\end{align}
This rank constraint implies that the effective degrees of freedom in the negative space scale with the class count $c$ rather than the sample count $n$. Consequently, the intrinsic dimensionality of negative pairs is governed by category correlations, reducing the required sampling complexity from $\mathcal{O}(n)$ to $\mathcal{O}(c)$.
Furthermore, compared to class-wise contrastive learning approaches, our method achieves efficient training through well-designed triples. 

\begin{table*}[t]
    \centering
    \caption{Object detection performance on the MS-COCO dataset. We employ the two-stage Faster RCNN with FPN as the detector. Teacher-student pairs are ResNet101 \& ResNet-18, ResNet101 \& ResNet50, and ResNet50 \& MobileNetV2, respectively. $\mathrm{\Delta}$ represents the improved performance compared to ReviewKD.
    }
     \small
     % \footnotesize
     % \renewcommand\arraystretch{1.05}
    % \resizebox{0.95\linewidth}{!}{
    \begin{tabular}{@{\hspace{1em}}l@{\hspace{3em}}ccc@{\hspace{3em}}ccc@{\hspace{3em}}ccc@{\hspace{1em}}}
    \toprule
        % \multirow{2}{*}{Methods} & \multicolumn{3}{c}{ResNet101 \& ResNet18} & \multicolumn{3}{c}{ResNet101 \& ResNet50} & \multicolumn{3}{c}{ResNet50 \& MobileNetV2} \\
        \multirow{2}{*}{Methods} & \multicolumn{3}{@{\hspace{1.5em}}l}{RN101 \& RN18} & \multicolumn{3}{@{\hspace{1.5em}}l}{RN101 \& RN50} & \multicolumn{3}{@{\hspace{1.5em}}l}{RN50 \& MN-V2} \\
         & AP & AP$_{50}$ & AP$_{75}$ & AP & AP$_{50}$ & AP$_{75}$ & AP & AP$_{50}$ & AP$_{75}$ \\
    \midrule
         Teacher & 42.04 & 62.48 & 45.88 & 42.04 & 62.48 & 45.88 & 40.22 & 61.02 & 43.81 \\
         Student & 33.26 & 53.61 & 35.26 & 37.93 & 58.84 & 41.05 & 29.47 & 48.87 & 30.90 \\
    \midrule
        KD \cite{hinton2015distilling} & 33.97 & 54.66 & 36.62 & 38.35 & 59.41 & 41.71 & 30.13 & 50.28 & 31.35 \\
        FitNet\cite{romero2014fitnets} & 34.13 & 54.16 & 36.71 & 38.76 & 59.62 & 41.80 & 30.20 & 49.80 & 31.69 \\
        FGFI\cite{wang2019distilling} & 35.44 & 55.51 & 38.17 & 39.44 & 60.27 & 43.04 & 31.16 & 50.68 & 32.92 \\
        ReviewKD\cite{chen2021distilling} & 36.75 & 56.72 & 34.00 & 40.36 & 60.97 & 44.08 & 33.71 & 53.15 & 36.13 \\
        DKD\cite{zhao2022decoupled} & 35.05 & 56.60 & 37.54 & 39.25 & 60.90 & 42.73 & 32.34 & 53.77 & 34.01 \\
        DKD+ReviewKD & 37.01 & 57.53 & 39.85 & 40.65 & 61.51 & 44.44 & 34.35 & 54.89 & 36.61 \\
    \midrule
        \textbf{CKD} & 35.08 & 55.47 & 38.01 & 39.71 & 60.64 & 43.31 & 31.72 & 50.01 & 33.62 \\
        \textbf{CKD+ReviewKD} & \textbf{37.65} & \textbf{57.91} & \textbf{40.71} & \textbf{41.45} & \textbf{61.93} & \textbf{45.31} & \textbf{35.31} & \textbf{55.35} & \textbf{37.88} \\
        $\mathrm{\Delta}$ & \textcolor[RGB]{0, 166, 79}{+0.90} & \textcolor[RGB]{0, 166, 79}{+1.19} & \textcolor[RGB]{0, 166, 79}{+6.71} & \textcolor[RGB]{0, 166, 79}{+1.09} & \textcolor[RGB]{0, 166, 79}{+0.96} & \textcolor[RGB]{0, 166, 79}{+1.23} & \textcolor[RGB]{0, 166, 79}{+1.60} & \textcolor[RGB]{0, 166, 79}{+2.20} & \textcolor[RGB]{0, 166, 79}{+1.75} \\
    \bottomrule
    \end{tabular}
    % }
    \label{tab5}
\end{table*}

\begin{table*}[t]
    \centering
    \caption{Instance segmentation performance on the MS-COCO dataset. We take Mask R-CNN with FPN as the detector. Teacher-student pairs are ResNet101 \& ResNet-18, ResNet101 \& ResNet50, and ResNet50 \& MobileNetV2, respectively. $\mathrm{\Delta}$ represents the improved performance compared to ReviewKD.
    }
     \small
     % \footnotesize
     % \renewcommand\arraystretch{1.05}
    % \resizebox{1.0\linewidth}{!}{
    \begin{tabular}{@{\hspace{1em}}l@{\hspace{4em}}c@{\hspace{3em}}c@{\hspace{3em}}c@{\hspace{3em}}c@{\hspace{3em}}c@{\hspace{3em}}c@{\hspace{1em}}}
    \toprule
        Method & AP & AP$_{50}$ & AP$_{75}$ & AP$_{S}$ & AP$_{M}$ & AP$_{L}$ \\
    \midrule
        RN101 & 38.63 & 60.45 & 41.28 & 19.48 & 41.33 & 55.29 \\
        RN18 & 31.25 & 51.07 & 33.10 & 14.18 & 32.80 & 45.53 \\
        ReviewKD \cite{chen2021distilling} & 33.62 & 53.91 & 35.96 & 15.03 & 35.31 & 50.30 \\
        \textbf{CKD+ReviewKD} & \textbf{34.59} & \textbf{55.01} & \textbf{36.82} & \textbf{15.68} & \textbf{36.35} & \textbf{51.45} \\
        $\mathrm{\Delta}$ & \textcolor[RGB]{0, 166, 79}{+0.97} & \textcolor[RGB]{0, 166, 79}{+1.10} & \textcolor[RGB]{0, 166, 79}{+0.86} & \textcolor[RGB]{0, 166, 79}{+0.65} & \textcolor[RGB]{0, 166, 79}{+1.04} & \textcolor[RGB]{0, 166, 79}{+1.15} \\
    \midrule
        RN101 & 38.63 & 60.45 & 41.28 & 19.48 & 41.33 & 55.29 \\
        RN50 & 35.24 & 56.32 & 37.49 & 15.03 & 35.31 & 50.30 \\
        ReviewKD \cite{chen2021distilling} & 36.98 & 58.13 & 39.60 & 17.54 & 39.57 & 53.19 \\
        \textbf{CKD+ReviewKD} & \textbf{37.89} & \textbf{59.45} & \textbf{40.61} & \textbf{18.40} & \textbf{40.62} & \textbf{54.91} \\
        $\mathrm{\Delta}$ & \textcolor[RGB]{0, 166, 79}{+0.91} & \textcolor[RGB]{0, 166, 79}{+1.32} & \textcolor[RGB]{0, 166, 79}{+1.01} & \textcolor[RGB]{0, 166, 79}{+0.86} & \textcolor[RGB]{0, 166, 79}{+1.05} & \textcolor[RGB]{0, 166, 79}{+1.72} \\
    \midrule
        RN50 & 35.24 & 56.32 & 37.49 & 15.03 & 35.31 & 50.30 \\
        MN-V2 & 28.37 & 47.19 & 29.95 & 12.09 & 29.01 & 41.70 \\
        ReviewKD \cite{chen2021distilling} & 31.56 & 50.70 & 33.44 & 12.76 & 32.44 & 47.39 \\
        \textbf{CKD+ReviewKD} & \textbf{32.80} & \textbf{52.64} & \textbf{34.96} & \textbf{14.67} & \textbf{34.38} & \textbf{48.84} \\
        $\mathrm{\Delta}$ & \textcolor[RGB]{0, 166, 79}{+1.24} & \textcolor[RGB]{0, 166, 79}{+1.94} & \textcolor[RGB]{0, 166, 79}{+1.52} & \textcolor[RGB]{0, 166, 79}{+1.91} & \textcolor[RGB]{0, 166, 79}{+1.94} & \textcolor[RGB]{0, 166, 79}{+1.45} \\
    \bottomrule
    \end{tabular}
    % }
    \label{tab_segment}
\end{table*}

For classic methods with $n$ samples, a total of $\mathcal{O}(nm)$ triples should be selected for each sample,
\begin{equation}
\binom{m-1}{1}\binom{n-m}{1}=nm-m^2-n+m\propto \mathcal{O}(nm)
\label{eq21}
\end{equation}
\Cref{eq21} shows that for each sample, these methods randomly choose one sample from the same class, including $m-1$ selections, and one sample from a different class, yielding a total of $\mathcal{O}(nm)$ triples. In contrast, CRD only needs to select one sample from different classes, as the positive pair is derived from the same sample,
\begin{align}
\binom{1}{1}\binom{n-m}{1}=n-m\propto \mathcal{O}(n).
\label{eq22}
\end{align}
However, the negative space $\mathcal{O}(n)$ is extensive, requiring a memory bank of size $K$ to store historical features. 
There are $K$ triples available for training each sample. In contrast, the proposed method requires sampling only $\mathcal{O}(c)$ triples,
\begin{align}
\binom{1}{1}\binom{c}{1}=c\propto \mathcal{O}(1),
\label{eq23}
\end{align}
where $c \ll n$ . We randomly shuffle data samples and select $c$ non-overlapping samples sequentially. Therefore, our sample-wise formulation exhibits superior training efficiency compared to the class-wise contrastive formulation.

Finally, we show the training procedure in \textbf{\Cref{alg1}}.

\section{Experiments}

\subsection{Image Classification}
\subsubsection{Datasets} 
We conduct experiments on three benchmark datasets, including CIFAR-100 \cite{Krizhevsky2009LearningML}, ImageNet-1K \cite{Deng2009ImageNet} and Places365 \cite{zhou2017places}. 
The CIFAR-100 dataset contains 100 fine-grained classes, with each class consisting of 600 images. The dataset is divided into 50,000 training images and 10,000 validation images.
The ImageNet-1K dataset, a large-scale classification dataset,  consists of 1000 classes, with approximately 1.28 million training images and 50,000 validation images. 
The Places365 dataset \cite{zhou2017places} is a large-scale scene recognition dataset containing 365 scene categories. This dataset includes 1.8 million training images and 36,500 validation images.

\subsubsection{Experimental Setup} 
We evaluate eleven isomorphic or heterogeneous architectures on CIFAR-100 and comparison methods include vanilla KD \cite{hinton2015distilling}, AT \cite{Zagoruyko2016PayingMA}, FitNet \cite{romero2014fitnets}, RKD \cite{park2019relational}, CRD \cite{tian2020representation}, OFD \cite{heo2019comprehensive}, ReviewKD \cite{chen2021distilling}, DML\cite{zhang2018deep}, TAKD\cite{mirzadeh2020improved}, and DKD \cite{zhao2022decoupled}.
For the large-scale ImageNet-1K and Places365 datasets, we focus on representative architecture pairs: ResNet34 \& ResNet18 and ResNet50 \& MobileNetV1. We compares against vanilla KD \cite{hinton2015distilling}, AT \cite{Zagoruyko2016PayingMA}, OFD \cite{heo2019comprehensive}, CRD \cite{tian2020representation}, ReviewKD \cite{chen2021distilling}, and DKD \cite{zhao2022decoupled}, reporting both top-1 and top-5 recognition rates across all experiments.

\subsubsection{Implementation Details}
All experiments are conducted on NVIDIA RTX 3090 GPUs with dataset-specific settings. 
For CIFAR-100, we set $\alpha$ to 100 and employ a cosine annealing learning rate scheduler. 
For homogeneous and heterogeneous networks, we set batch sizes to 64 and 32, with an initial learning rate of 0.05 and 0.01. 
For ImageNet-1K, we use cosine annealing with an initial learning rate of 0.2.
We set the parameter $\alpha$ to 10 and the batch size to 512.
For Places365,  we set the parameter $\alpha$ to 10 and the batch size to 256.

\begin{figure}[t]
    \centering
    \includegraphics[width=0.85\columnwidth]{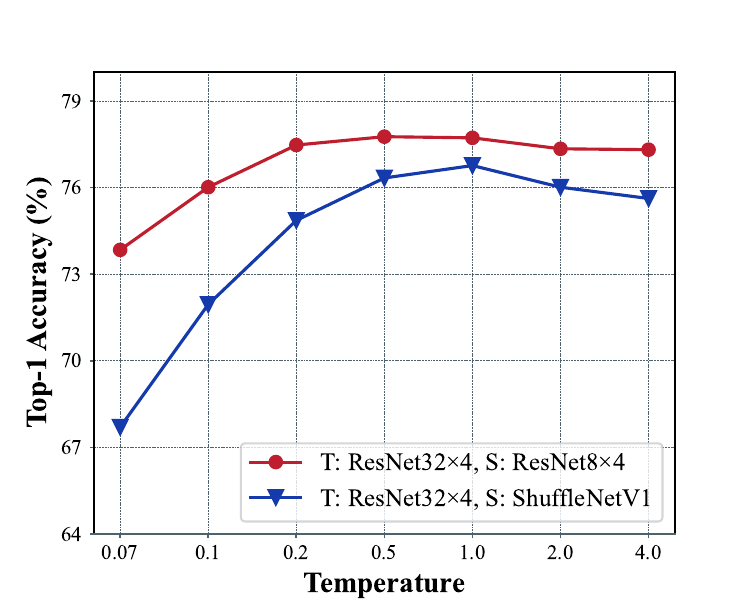}
    \caption{Comparison of temperatures on CIFAR-100. Experiments are conducted on (1) ResNet8$\times$4 and ResNet32$\times$4, (2) ShuffleNetV1 and ResNet32$\times$4.}
    \label{fig7}
\end{figure}

\begin{table}[t]
    \centering
    \caption{Ablation studies of contrastive formulation on the CIFAR-100 dataset. Experiments are conducted on homogeneous and heterogeneous teacher-student pairs. 
    SupCon and CRD are class-wise methods, while our CKD is a sample-wise approach.}
    % \footnotesize
    \begin{tabular}{l|c@{\hspace{0.75em}}c@{\hspace{0.75em}}c|c@{\hspace{0.75em}}c@{\hspace{0.75em}}c}
    \toprule
         Method & \multicolumn{3}{c|}{Homogeneous} & \multicolumn{3}{c}{Heterogeneous} \\ 
    \midrule
        \multirow{2}{*}{Teacher} & RN56 & W-40-2 & W-40-2 & VGG13 & RN50 & RN32$\times$4 \\
          & 72.34 & 75.61 & 75.61 & 74.64 & 79.34 & 79.42 \\
        \multirow{2}{*}{Student} & RN20 & W-16-2 & W-40-1 & MN-V2 &  MN-V2 & SN-V2 \\
          & 69.06 & 73.26 & 71.98& 64.60  & 64.60 & 71.82 \\
    \midrule
        SupCon\cite{khosla2020supervised} & 69.93 & 75.45 & 74.10 & 67.39  & 67.72 & 75.20 \\
        % SupCon & 69.93 & 75.45 & 74.10 & 67.39  & 67.72 & 75.20 \\
        CRD\cite{tian2020representation} & 71.16 & 75.48 & 74.14  & 69.73 & 69.11 & 75.65 \\
        % CRD & 71.16 & 75.48 & 74.14  & 69.73 & 69.11 & 75.65 \\
        \textbf{CKD} & \textbf{72.12} & \textbf{76.28} & \textbf{75.14} & \textbf{70.11} & \textbf{70.62} & \textbf{78.18} \\
    \bottomrule
    \end{tabular}
    \label{tab6}    
\end{table}

\begin{figure}[t]
\centering
\includegraphics[width=0.85\columnwidth]{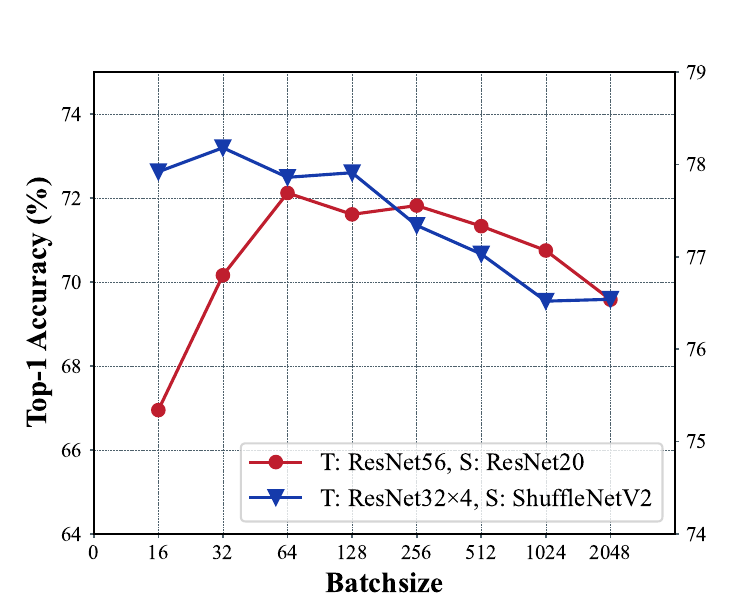}
\caption{Comparison of batch sizes on CIFAR-100. Experiments are conducted on (1) ResNet8$\times$4 and ResNet32$\times$4, (2) ShuffleNetV1 and ResNet32$\times$4.}
\label{fig8}
\end{figure}

\begin{table}[t]
    \centering
    % \footnotesize
    \caption{Ablation studies of triple design on the CIFAR-100 dataset. Experiments are conducted on both homogeneous and heterogeneous teacher-student pairs. 
    Due to the fixed teacher, triples $\left (\mathbf{t}_i, \mathbf{s}_i, \mathbf{t}_j\right)$ are meaningless and excluded from this table.
    % \vspace{-0.3mm}
    }
    \begin{tabular}{l|c@{\hspace{0.75em}}c@{\hspace{0.75em}}c|c@{\hspace{0.75em}}c@{\hspace{0.75em}}c}
    \toprule
         Triple& \multicolumn{3}{c|}{Homogeneous} & \multicolumn{3}{c}{Heterogeneous} \\  
    \midrule
        \multirow{2}{*}{Teacher} & RN56 & W-40-2 & W-40-2 & VGG13 & RN50 & RN32$\times$4 \\
          & 72.34 & 75.61 & 75.61 & 74.64 & 79.34 & 79.42 \\
        \multirow{2}{*}{Student} & RN20 & W-16-2 & W-40-1 & MN-V2 & MN-V2 & SN-V2 \\
          & 69.06 & 73.26 & 71.98 & 64.60 & 64.60 & 71.82 \\
    \midrule
        $\left ({s}_i, {t}_i, {s}_j\right)$ & 71.76 & 75.31 & 74.40 & 70.04 & 68.93 & 78.02 \\
        $\left ({s}_i, {t}_i, {t}_j\right)$ & 71.69 & 75.88 & 74.43 & 70.09 & 69.68 & 77.06 \\
        $\boldsymbol{\left({t}_i, {s}_i, {s}_j\right)}$ & \textbf{72.12} & \textbf{76.28} & \textbf{75.14} & \textbf{70.11} & \textbf{70.62} & \textbf{78.18} \\
    \bottomrule
    \end{tabular}
    \label{tab7}    
\end{table}

\subsubsection{Performance on CIFAR-100} 
We conduct classification experiments by using eleven combinations of isomorphic or heterogeneous teacher-student networks, including ResNet \cite{He2015DeepRL}, WideResNet \cite{Zagoruyko2016WideRN}, VGG \cite{Simonyan2014VeryDC}, ShuffleNet \cite{Zhang2017ShuffleNetAE, Ma2018ShuffleNetVP}, and MobileNet \cite{Sandler2018MobileNetV2IR}. 

\Cref{tab1} shows classification performance using both homogeneous and heterogeneous teacher-student architectures. 
Our method achieves competitive performance against feature-based and logit-based methods.
The proposed CKD achieves an average improvement of 2.20\% over vanilla KD across all 11 settings, with a maximum gain of 3.73\%.
Our method surpasses state-of-the-art feature distillation approaches, with an average accuracy improvement of 1.00\% against CKD. 
Moreover, CKD shows stronger improvements in heterogeneous settings compared to homogeneous configurations.
The superior performance of sample-wise contrastive learning over class-wise alternatives is evident in heterogeneous scenarios.
The sample-wise contrastive mechanism appears particularly effective when dealing with heterogeneous architectures. Unlike feature-based methods requiring explicit layer alignment, CKD creates architecture-agnostic embeddings with logit alignment, better bridging the representation gap between different networks.

\subsubsection{Performance on ImageNet-1K} 
To further verify the effectiveness of CKD on a large-scale dataset, we perform experiments on ImageNet-1K. 
As listed in \Cref{tab2}, our method achieves superior performance with both teacher-student architectures, providing further evidence of the effectiveness on large-scale datasets. 
Specifically, when distilling between homogeneous networks, CKD achieves 72.24\% Top-1 accuracy, representing a 1.58\% improvement over vanilla KD. More remarkably, in the cross-architecture setting, CKD attains 72.97\% Top-1 accuracy with a substantial 4.39\% gain. 
These results not only validate its compatibility with diverse network structures but also establish new state-of-the-art performance on large-scale knowledge distillation.

\subsubsection{Performance on Places365} To evaluate our method on large-scale scene recognition, we conduct experiments on the Places365 validation set. As shown in \Cref{tab3}, our method achieves the SOTA Top-1 accuracy and competitive Top-5 accuracy, demonstrating its effectiveness in complex scene understanding tasks. Notably, our method outperforms CRD in all settings, highlighting the superiority of sample-wise contrastive learning over class-wise approaches. These results validate the robustness and scalability of our method in large-scale knowledge distillation tasks.

\subsection{Object Detection and Instance Segmentation}
\subsubsection{Datasets}
We evaluate our method on the MS-COCO dataset \cite{Lin2014MicrosoftCC}, which contains 118,000 training and 5,000 validation images across 80 object categories. 

\begin{figure}[t] 
\centering 
\includegraphics[width=0.9\columnwidth]{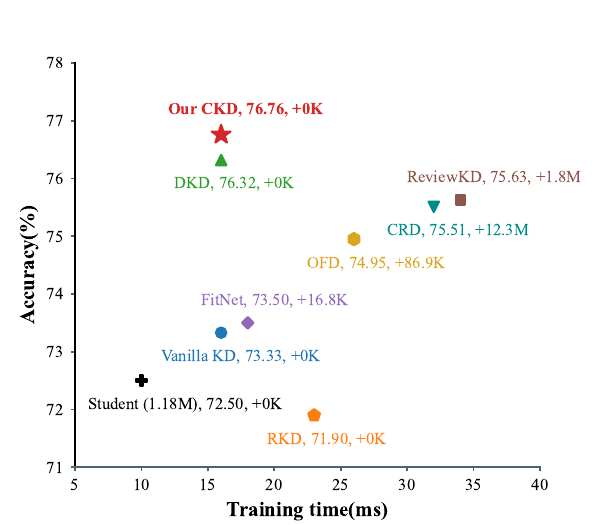} 
\vspace{-0.2cm}
\caption{Comparison of training time per batch and accuracy on CIFAR-100. The teacher model is ResNet32$\times$4, and the student model is ResNet8$\times$4. Additional parameters are denoted after $+$ and are added based on the student's foundation.
} 
\label{fig9} 
\end{figure}

\subsubsection{Experimental Setup} We analyze ResNet-101 \& ResNet-18, ResNet-101 \& ResNet-50, and ResNet-50 \& MobileNetV2 teacher-student pairs. Following standard evaluation protocols, we implement the detection framework using Detectron2.

\subsubsection{Implementation Details}
All experiments are conducted on NVIDIA RTX 3090 GPUs. 
For object detection, we employ Faster R-CNN with FPN as the feature extractor and implement CKD on the R-CNN head. 
For instance segmentation, we utilize Mask R-CNN as the baseline architecture. 
We set $\alpha$ to 0.1 and the temperature $\tau$ to 1.0. 
All training protocols, including learning rate schedulers and loss functions, strictly follow Detectron2.

\subsubsection{Object Detection Performance on MS-COCO} 
\Cref{tab5} presents the object detection results on the MS-COCO dataset under three teacher-student settings. 
We can see that the proposed CKD achieves comparable performance against logit-based approaches but inferior performance compared to feature-based approaches, particularly for ReviewKD. This can be attributed to the fact that object detection heavily relies on spatial representations to locate objects. However, logits lack detailed spatial information \cite{Li2017MimickingVE, wang2019distilling}. ReviewKD \cite{chen2021distilling} conducts multi-level feature distillation, leading to superior performance. To leverage spatial information, we combine ReviewKD with logit-based approaches. Notably, the combined approaches, including CKD+ReviewKD and DKD+ReviewKD, significantly improve detection results compared to their standalone counterparts. 

\subsubsection{Instance Segmentation Performance on MS-COCO}
\Cref{tab_segment} presents instance segmentation results on the MS-COCO dataset across different frameworks. 
Notably, existing knowledge distillation methods specifically designed for instance segmentation remain scarce, which limits comparisons primarily to ReviewKD and the proposed CKD+ReviewKD. 
CKD+ReviewKD achieves performance gains over ReviewKD, with absolute accuracy increases of +0.97, +0.91, and +1.24 points in different teacher-student pairs, respectively. 
The most significant improvement emerges in the RN50-MobileNetV2 setting, particularly for small objects.
The synergy between logit-based distillation CKD and feature alignment ReviewKD addresses both global semantic preservation and local spatial detail transfer. This dual mechanism explains the balanced improvements across all object scales \cite{Li2017MimickingVE}. Experiments demonstrate robustness across different network gaps from RN101-RN50 to RN50-MN-V2, suggesting wide applicability in practical scenarios where compact models are essential.

\subsection{Ablation Study}
\label{ablation studies}
\subsubsection{Effect of Temperature $\tau$}
In the training phase, a high $\tau$ yields a smooth distribution across classes, making distillation more challenging. But a low $\tau$ results in a sharp distribution. To investigate the effect of $\tau$, we conduct ablation studies by using ResNet32$\times4$ \& ResNet8$\times4$ and ResNet32$\times4$ \& ShuffleNetV1 pairs. Typically, $\tau$ is set to a low value within a range of 0.07 to 4.0, while other parameters are kept constant.

\begin{table}[t]
    \centering
    % \footnotesize
    % \small
    \caption{Classification accuracy (\%) on the CIFAR-100 validation set. WRN-16-2 is used as a student. In the left-hand section, WRN series networks serve as teachers, while in the right-hand section, different series networks are utilized as teachers.}
    % \resizebox{1.0\linewidth}{!}{
    \begin{tabular}{l@{\hspace{1.2em}}|c@{\hspace{1.1em}}c@{\hspace{1.1em}}c|c@{\hspace{1.1em}}c@{\hspace{1.1em}}c}
    \toprule
         Method & \multicolumn{3}{c|}{WRN Series} & \multicolumn{3}{c}{Different Series} \\
    \midrule
        \multirow{2}{*}{Teacher} & W-40-2 & W-16-4 & W-28-4 & VGG13 & W-16-4 & RN50 \\
          & 75.61 & 77.21 & 79.12 & 74.64 & 77.21 & 79.34 \\
        Studnet & 73.26 & 73.26 & 73.26 & 73.26 & 73.26 & 73.26\\
    \midrule
         KD\cite{hinton2015distilling} & 74.92 & 76.00 & 75.14 & 74.30 & 76.00 & 74.62 \\
         CKD & \textbf{76.28} & \textbf{76.22} & \textbf{76.42} & \textbf{75.14} & \textbf{76.22} & \textbf{76.45} \\
    \bottomrule
    \end{tabular}
    % }
    \label{tab8}    
\end{table}

\Cref{fig7} illustrates the temperature sensitivity analysis for distillation.
Both settings exhibit a clear performance parabola with peak Top-1 accuracy 79.0\% and 76.3\%, respectively, at optimal temperature $\tau=1.0$. 
Our method achieves optimal performance when temperature $\tau$ is approximately 1.0 and performs poorly when $\tau$ is low or high. This suggests that proper temperature scaling balances the sharpness of knowledge transfer.

From \Cref{throrm1}, we conclude that the gradient $\nabla_{\boldsymbol{s}_*}\mathcal{L}_i$ is proportional to $g_i$. When $\tau<1$, the difference between $e^{ f\left ( \boldsymbol{t}_i,\boldsymbol{s}_i \right)/\tau}$ and $e^{ f\left ( \boldsymbol{t}_i,\boldsymbol{s}_j \right)/\tau}$ is exponentially magnified. 
In the limit, we obtain $\lim_{\tau \to 0} g_i= 0$. As a result, $g_i$ approaches zero, causing a vanishing gradient. When $\tau>1$, the difference is reduced and $g_i$ becomes more sensitive to negative samples, as $\lim_{\tau \to \infty} g_i= \frac{B}{B+1}$.

\begin{figure*}[t] 
\centering 
\includegraphics[width=1.85\columnwidth]{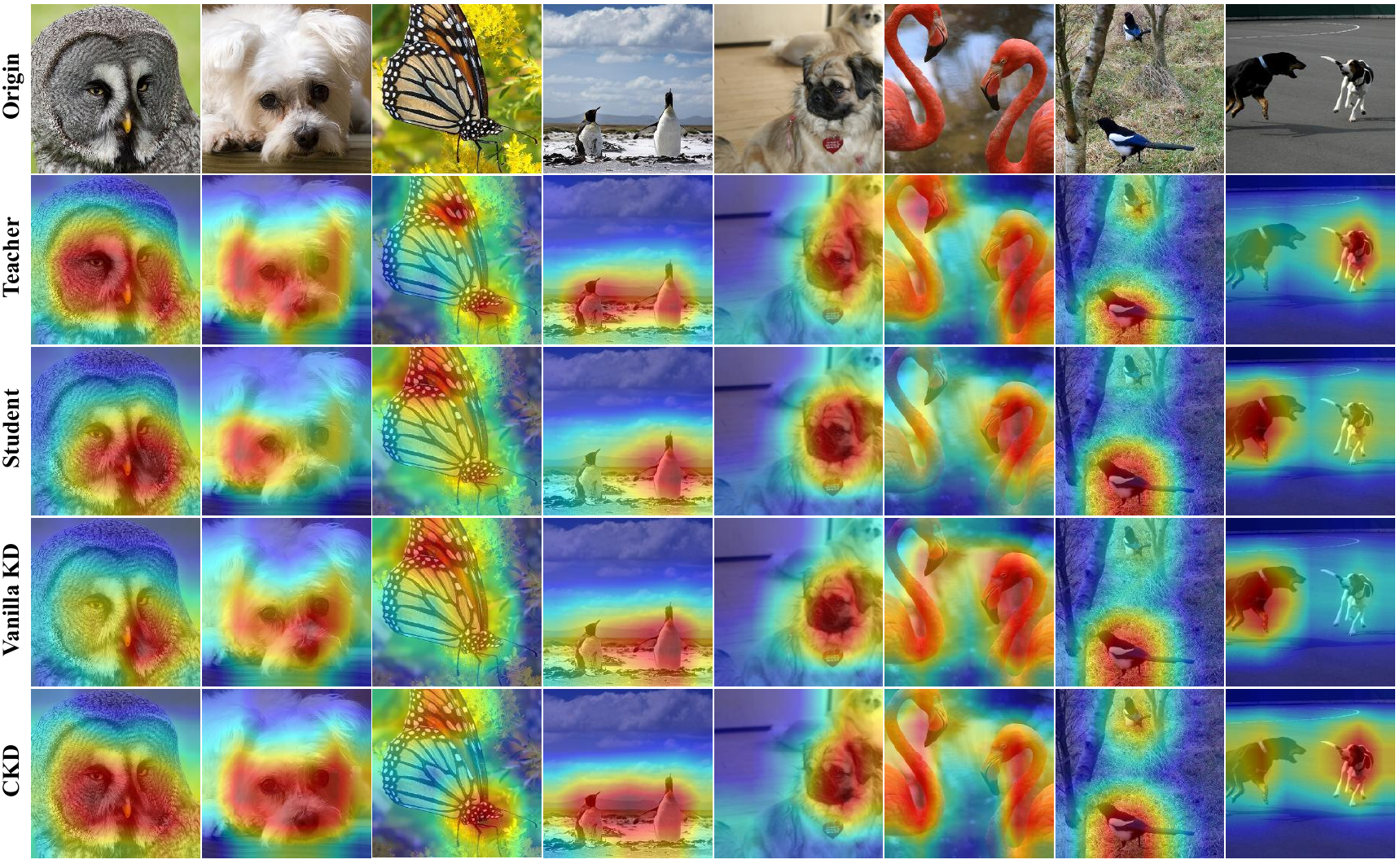} 
\vspace{-0.2cm}
\caption{Class Activation Map via GradCAM. The teacher networks (ResNet34) and the student networks (ResNet18) are trained on ImageNet-1K.} 
\label{fig10} 
\end{figure*}

\subsubsection{Effect of Batchsize}
To analyze the effect of batch size, we utilize two teacher-student pairs: ResNet56 \& ResNet20 and ResNet32$\times4$ \& ShuffleNetV2. We set the batch size $B$ to $2^k$, where $k\in [4,11] \cap  k\in \mathbb{N}$.

\Cref{fig8} presents top-1 accuray across various batch sizes. 
We can see that our method achieves maximum accuracies when the batch size is 64 for ResNet56 \& ResNet20 and 32 for ResNet32$\times4$ \& ShuffleNetV2. 
Batch sizes outside these optimal values yield suboptimal results.

Experiments reveal that a large batch size potentially hampers the efficacy of our method by overwhelming inter-sample distillation through an increased $\beta$. However, a small batch size suffers from insufficient negative pairs due to vanishing gradients. 
The empirical optima should satisfy the category count constraint $B<c$. When $B > c$, distillation focuses on inter-sample relations, degrading class-specific knowledge transfer. This analysis proves that batch size significantly balances intra/inter-sample knowledge transfer.

\subsubsection{Influence of Contrastive Formulation}
\Cref{fig6} visualizes differences between class-wise and sample-wise contrastive learning frameworks. In class-wise approaches such as SupCon \cite{khosla2020supervised}, positive pairs are formed exclusively within the same class. 
This category also includes methods like CRD \cite{tian2020representation}, which specifically constructs positive pairs from teacher-student models of the same sample while selecting negative pairs from different classes. By contrast, the sample-wise CKD adopts a different strategy in that it selects negative pairs from distinct samples, even when they potentially belong to the same class.

\Cref{tab6} compares classification performance across three contrastive learning frameworks.
Notably, CRD exhibits improved accuracy over SupCon by maintaining a memory bank to access negative instances.
However, our CKD achieves superior performance by employing sample-wise negative pairs, which is particularly evident in heterogeneous settings where it outperforms CRD by 2.53\% with the SN-v2 setting. 

\subsubsection{Influence of Triple Design}
In knowledge distillation, triplet elements can be sampled from either the teacher (t) or student (s) models. Hence, there are four kinds of triples, including $\left(\boldsymbol{s}_i, \boldsymbol{t}_i, \boldsymbol{s}_j\right)$, $\left(\boldsymbol{s}_i, \boldsymbol{t}_i, \boldsymbol{t}_j\right)$, $\left(\boldsymbol{t}_i, \boldsymbol{s}_i, \boldsymbol{t}_j\right)$ and $\left(\boldsymbol{t}_i, \boldsymbol{s}_i, \boldsymbol{s}_j\right)$. $\left(\boldsymbol{s}_i, \boldsymbol{t}_i, \boldsymbol{s}_j\right)$ means that we align $\mathbf{s}_i$ and $\boldsymbol{t}_i$ while distancing $\boldsymbol{s}_i$ from $\boldsymbol{s}_j$. This triple is implemented through diagonal modification of the affinity matrix between $\boldsymbol{s}_i$ and $\boldsymbol{t}_i$ pairs. Notably, $\left(\boldsymbol{t}_i, \boldsymbol{s}_i, \boldsymbol{t}_j\right)$ reduces to intra-sample alignment since teacher logits remain fixed during distillation.. 
 
\Cref{tab7} shows the classification performance for three triple settings.
The teacher-anchored triples $\left(\boldsymbol{t}_i, \boldsymbol{s}_i, \boldsymbol{s}_j\right)$ demonstrates superior performance across all settings, achieving 1.43\% higher accuracy than $\left(\boldsymbol{s}_i, \boldsymbol{t}_i, \boldsymbol{t}_j\right)$ in SN-V2 heterogeneous architecture. This advantage stems from its clear optimization objective by anchoring on fixed teacher features $\boldsymbol{t}_i$ to simultaneously attract student $\boldsymbol{s}_i$ and repel $\boldsymbol{s}_j$.
However, as discussed in \Cref{subsec:ckd}, triples $\left(\boldsymbol{s}_i, \boldsymbol{t}_i, \boldsymbol{t}_j\right)$ risks gradient conflict due to competing objectives between moving $\boldsymbol{s}_i$ toward $\boldsymbol{t}_i$ and away from $\boldsymbol{t}_j$.
For triples $\left(\boldsymbol{s}_i, \boldsymbol{t}_i, \boldsymbol{s}_j\right)$, this strategy mainly performs contrastive learning in the student logits space, yet insufficiently leverages the prior knowledge from the teacher model.  

\subsection{More Analyses}
\subsubsection{Training Efficiency}
To evaluate the efficiency-accuracy trade-off, we compare our method against knowledge distillation baselines on CIFAR-100. 
As shown in \Cref{fig9}, our method exhibits high training efficiency, striking a good balance between model parameters and classification performance. Feature-based methods require more parameters and longer training times than logit-based methods. ReviewKD and CRD, in particular, take twice the training time over our method due to multi-level feature review and memory bank. Our method simultaneously improves accuracy and reduces training time.

\subsubsection{Effect of Teacher Capacity}
Existing studies \cite{cho2019efficacy, wang2021knowledge} have demonstrated that large teacher models may be suboptimal for distilling knowledge into smaller student models due to parameter and capacity gaps between the two architectures. To investigate this phenomenon, we conduct experiments to examine how teacher networks with varying model capacities affect knowledge transfer efficiency.

\Cref{tab8} compares knowledge distillation performance on the CIFAR-100 dataset. We can see that our method achieves comparable performance with increasing WRN teacher capacity, whereas vanilla KD exhibits variation in performance. 
For vanilla KD, larger models lead to performance degradation, achieving the accuracies of WRN-16-4 and WRN-28-4 at 76.00 and 75.14, respectively. 
In contrast, our CKD achieves progressive accuracy improvements as teacher capacity increases from WRN-40-2 to WRN-28-4, demonstrating effective utilization of enhanced teacher representations.
Wide Residual Networks \cite{Zagoruyko2016WideRN} is a ResNet variant. 
Our method also obtains an accuracy on par with ResNet50 as the teacher model, while vanilla KD has the second-worst performance. 
Both methods show limited effectiveness with VGG13 architecture, suggesting architectural compatibility influences distillation efficacy.
These results prove that our method can partially mitigate the parameter and capacity gaps of larger models.

\subsection{Visualizations}
\Cref{fig10} demonstrates the semantic alignment between CKD and teacher networks through GradCAM visualizations, where the ResNet34-ResNet18 pair was trained on ImageNet-1K. 
We can observe that our method effectively handles complex scenes and multiple objects. 
In multi-object cases, CKD captures both discriminative features of birds circled in red, while the student baseline and vanilla KD focus solely on the foreground bird.
Across diverse scenes like natural landscapes and indoor settings, CKD consistently preserves the spatial attention patterns, particularly in fine-grained regions like butterfly wing patterns.
In the final column, CKD aligns with the teacher model in localizing the target dog, whereas the student network and vanilla KD erroneously emphasize the distractor dog. 
These visual evidence confirm that CKD maintains teacher-aligned sample-specific semantics rather than generic category features.

\section{Conclusion}
In this paper, we have presented a contrastive knowledge distillation approach for image classification and object detection. To be specific, we formulate knowledge distillation as a logit alignment problem, mimicking the intra-sample numerical value and inter-sample semantic structure of teacher models. By converting the structure-preserving alignment into a sample-wise contrastive learning framework, our approach can be effectively and efficiently trained with well-designed positive and negative pairs. Experiments demonstrate that our method achieves highly competitive performance against state-of-the-art methods on benchmarks. In the future, we will explore the application of this framework to large multimodal models.

\bibliographystyle{IEEEtranS}
\bibliography{egbib}

\end{document}